\documentclass{article}

\usepackage[preprint]{neurips_2026}

\usepackage{amsmath,amssymb,amsthm,mathtools}
\usepackage{amsfonts,bm}
\usepackage{algorithm}
\usepackage{algpseudocode}
\usepackage{booktabs}
\usepackage{multirow}
\usepackage{graphicx}
\usepackage{subcaption}
\usepackage{float}
\usepackage{xcolor}
\usepackage{enumitem}
\usepackage[hidelinks]{hyperref}
\usepackage{tikz}

\usetikzlibrary{arrows.meta,positioning,fit,calc}

\newtheorem{definition}{Definition}

\newtheorem{remark}{Remark}
\newtheorem{proposition}{Proposition}
\newtheorem{lemma}{Lemma}
\newtheorem{theorem}{Theorem}

\newcommand{\norm}[1]{\left\lVert #1 \right\rVert}
\newcommand{\inner}[2]{\left\langle #1,#2 \right\rangle}

\providecommand{\keywords}[1]{%
  \par\medskip
  \noindent\textbf{Keywords: } #1
}

\title{SMA-DP: Spectral Memory-Aware Differential Privacy for Deep Learning}

\author{%
  Mohammad Partohaghighi \\
  Department of Electrical Engineering and Computer Science \\
  University of California, Merced \\
  Merced, CA, USA \\
  \texttt{mpartohaghighi@ucmerced.edu} \\
  \And
  Roummel P. Marcia \\
  Department of Applied Mathematics \\
  University of California, Merced \\
  Merced, CA, USA \\
  \texttt{rmarcia@ucmerced.edu} \\
}

\begin{document}

\maketitle

\begin{abstract}
Differentially private stochastic gradient descent (DP-SGD) enables private
deep learning through per-example clipping and calibrated Gaussian noise, but
the resulting updates can be high-variance and may degrade utility, especially
on more challenging datasets. We propose \textbf{SMA-DP-SGD}, a
\textbf{Spectral Memory-Aware Differentially Private Stochastic Gradient
Descent} method that augments DP-SGD with a fractional memory branch constructed
only from previously privatized noisy releases. The method uses
WeightWatcher-inspired power-law spectral exponents as group-wise reliability
signals to adapt the decay and effective depth of memory; in our experiments,
the group-wise formulation is instantiated layer-wise. Private-history
alignment, norm matching, and warm-up activation further stabilize the memory
contribution. The privacy structure remains transparent: conditioned on the
private release history, the memory branch is fixed, and the only newly
data-dependent term is the current clipped sum scaled by a fixed coefficient
\(\beta\). Thus, SMA-DP-SGD preserves a clean conditional sensitivity structure
and exactly recovers standard group-wise DP-SGD when \(\beta=1\). Experiments
on CIFAR-100, CIFAR-10, and MNIST show that SMA-DP-SGD achieves competitive or
superior accuracy compared with several DP optimization baselines, with the
strongest gains appearing on the more challenging CIFAR-100 and CIFAR-10
settings. Ablations on CIFAR-10 show that the mixing parameter \(\beta\)
controls the privacy--utility trajectory, while spectral and memory diagnostics
confirm that the method maintains a controlled short-to-moderate effective
memory depth and a small memory-branch ratio. Runtime analysis shows that this
spectral memory mechanism introduces additional computational overhead, about
\(2.94\times\) DP-SGD in our CIFAR-10 implementation, highlighting a practical
trade-off between adaptive private memory and computational cost.
\end{abstract}

\keywords{
Differential Privacy;
DP-SGD;
Private Deep Learning;
Fractional Memory Optimization;
Spectral Diagnostics;
WeightWatcher;
Heavy-Tailed Self-Regularization;
Private Release History;
Adaptive Memory Tempering;
Privacy--Utility Trade-off
}


\section{Introduction}
\label{sec:introduction}

Deep neural networks are increasingly trained on sensitive data, including
medical records, genomic profiles, user behavior, financial transactions, and
personal-device data. Differential privacy (DP) provides a rigorous framework
for limiting the influence of any individual training example on the learned
model~\cite{dwork2006calibrating,dwork2014algorithmic,partohaghighi2026roughness2}. In private deep
learning, differentially private stochastic gradient descent (DP-SGD) has become
a standard workhorse: it clips per-example gradients and adds calibrated
Gaussian noise before updating the model~\cite{abadi2016deep}. This mechanism
has enabled private training in centralized, federated, and data-sensitive
learning settings~\cite{mcmahan2018learning,papernot2018scalable}. However,
the same clipping and noise that provide privacy also perturb the optimization
signal, often increasing variance, slowing convergence, and reducing utility.

A substantial body of work has improved private training through sharper
privacy accounting, subsampling analyses, adaptive clipping, transfer learning,
and scaling strategies~\cite{mironov2017renyi,wang2019subsampled,bu2020deep,
andrew2021differentially,tramer2021differentially,de2022unlocking}. These
developments have significantly improved the practicality of DP deep learning,
yet noisy private optimization remains challenging. In particular, DP-SGD
typically relies heavily on the current noisy clipped gradient. When the privacy
budget is tight, this current update can be a high-variance estimate of the
underlying optimization direction, making training sensitive to clipping norms,
noise multipliers, learning rates, and model scale~\cite{bassily2014private,
de2022unlocking}.

Private optimization need not be memoryless. Classical stochastic optimization
already exploits historical information through momentum, acceleration,
adaptive preconditioning, and adaptive moments~\cite{robbins1951stochastic,
polyak1964some,nesterov1983method,duchi2011adaptive,kingma2015adam}. The same
principle is compelling under DP: previous noisy private releases may still
contain useful directional information about the optimization trajectory.
Discarding all historical information forces training to rely primarily on the
current noisy clipped gradient, which can amplify variance and slow convergence.
At the same time, memory must be introduced carefully under DP. Raw historical
gradients remain data-dependent, and current-gradient-dependent memory gates can
create additional data-dependent pathways that obscure the sensitivity of the
current query. This motivates a privacy-compatible memory principle:
\emph{historical information should enter the private update only through
quantities that have already been privatized}.

Fractional calculus provides a principled framework for modeling
memory-dependent and nonlocal dynamics through power-law historical kernels
\cite{podlubny1999fractional,kilbas2006theory,diethelm2010analysis,
meerschaert2012stochastic,west2014colloquium}. Unlike short-horizon
exponential moving averages, fractional-order memory kernels can represent
long-range temporal dependence, making them a natural mechanism for aggregating
optimization signals across multiple steps. In a DP setting, however, the
historical kernel should not be applied to raw gradients. Instead, the memory
source must be the \emph{private release history}: the sequence of previously
privatized noisy releases. This preserves the intended historical structure
while keeping the memory branch compatible with privacy accounting.

A second challenge is that memory should not be blind or uniform across layers.
A fixed memory horizon treats all layers as equally reliable, even though
different layers may exhibit different training states, spectral organization,
stability, noise sensitivity, and generalization behavior. In non-private deep
learning, the spectra of weight matrices have been widely studied as indicators
of optimization geometry, sharpness, capacity, and generalization
\cite{bartlett2017spectrally,neyshabur2017exploring,keskar2017large,
pennington2017nonlinear,ghorbani2019investigation,foret2021sharpness}. These
observations suggest that layer-wise spectral structure can provide information
about how aggressively each layer should retain or forget historical signals.

We therefore draw on Heavy-Tailed Self-Regularization (HT-SR) and
WeightWatcher-style spectral diagnostics. This line of work analyzes the
empirical spectral density of trained weight matrices and fits power-law tails
to layer spectra~\cite{martin2019traditional,martin2021implicit}. Martin,
Peng, and Mahoney showed that such power-law spectral metrics can diagnose
quality trends of pretrained neural networks, including settings where training
and testing data are unavailable~\cite{martin2021predicting}. In this paper,
we use the resulting spectral exponent not as a privacy mechanism, and not as a
universal certificate of layer quality, but as a layer-wise reliability
heuristic for controlling how quickly private memory should decay. This use is
motivated by spectral diagnostics and evaluated empirically through ablations.

Despite progress in DP optimization, fractional memory modeling, and spectral
diagnostics, their intersection remains underexplored. Existing DP optimizers
primarily focus on clipping, noise calibration, privacy accounting, adaptive
moments, preconditioning, and training stabilization. Fractional methods provide
memory models, but are generally not designed around DP-SGD conditional
sensitivity. WeightWatcher and HT-SR provide layer-wise spectral diagnostics,
but are typically used for model diagnosis rather than as control signals for
privacy-compatible memory. To our knowledge, there has not been a systematic
study of a DP-SGD extension that simultaneously combines private
release-history memory, fractional power-law weighting, layer-wise spectral
reliability, clean conditional sensitivity analysis, and exact reduction to
DP-SGD.

We propose \textbf{SMA-DP-SGD}, a \textbf{Spectral Memory-Aware
Differentially Private Stochastic Gradient Descent} method. At each private
step, SMA-DP-SGD constructs a fractional memory branch for each parameter group
from the private release history, i.e., previously privatized noisy gradient
releases. The formulation is group-wise and general; in our experiments, each
trainable layer is instantiated as one group, so the mechanism operates
layer-wise. A WeightWatcher-inspired power-law exponent provides a
group-specific reliability signal that controls spectral tempering: groups with
favorable spectral organization retain longer effective memory, whereas groups
outside the chosen reliability range forget older releases more aggressively.
The memory contribution is further stabilized using private-history alignment,
norm matching, and warm-up activation. A fixed mixing coefficient controls the
interpolation between the current clipped update and the memory branch; when
this coefficient disables the memory contribution, SMA-DP-SGD exactly reduces
to standard group-wise DP-SGD.

The privacy argument is intentionally transparent. The memory branch is computed
only from the prior private release history, and is therefore fixed after
conditioning on that history. Consequently, the only newly data-dependent
component of the current query is the scaled clipped gradient contribution. This
yields a clean group-wise conditional sensitivity structure while avoiding the
reuse of raw historical gradients. The formal full-step privacy guarantee uses a
conservative joint accountant across parameter groups, while marginal
noise-to-sensitivity ratios are used only for group-wise diagnostic
interpretation.

Our contributions are summarized as follows:
\begin{itemize}
    \item We introduce SMA-DP-SGD, a DP-SGD extension that injects fractional
    memory using previous noisy private releases rather than raw historical
    gradients.
    \item We use WeightWatcher-inspired spectral exponents as layer-wise
    reliability signals to adapt memory decay and effective memory depth.
    \item We provide a clean conditional sensitivity structure and show that
    SMA-DP-SGD exactly recovers standard group-wise DP-SGD when the memory
    contribution is disabled.
    \item We empirically evaluate the method against DP optimization baselines
    and through ablations over spectral intervals, effective memory depth,
    memory ratios, fixed mixing coefficient, shared fractional order, shared
    memory window, and shuffled spectral assignments.
\end{itemize}

The remainder of this paper is organized as follows.
Section~\ref{sec:related_work} reviews related work on differentially private
optimization, fractional memory modeling, and spectral diagnostics.
Section~\ref{sec:methodology} presents SMA-DP-SGD, including the
private-release-history memory construction, spectral tempering, and
group-wise update rule. Section~\ref{sec:experiment} describes the experimental
setup and reports the main empirical results, ablation studies, spectral
diagnostics, and runtime comparison. Section~\ref{sec:broader_impact}
discusses broader impacts. Section~\ref{sec:conclusion} concludes the paper
and summarizes limitations and future directions. Additional experimental
results and diagnostics are provided in Appendix~\ref{app:additional_experiments},
and the full theoretical analysis is provided in
Appendix~\ref{app:sma_dp_sgd_theory}.

\section{Related Work}
\label{sec:related_work}

\paragraph{Differentially private deep learning.}
Differential privacy provides a formal framework for limiting the influence of
individual training examples on the output of a learning algorithm
\cite{dwork2006calibrating,dwork2014algorithmic}. DP-SGD has become the
standard optimization mechanism for private deep learning by combining
per-example gradient clipping with calibrated Gaussian perturbations
\cite{abadi2016deep}. Subsequent work has improved the practicality of private
training through refined privacy accounting, subsampled RDP analysis, Gaussian
DP, adaptive clipping, transfer learning, and large-scale private training
strategies
\cite{mironov2017renyi,wang2019subsampled,bu2020deep,andrew2021differentially,
tramer2021differentially,de2022unlocking}. These methods primarily address the
privacy--utility trade-off through accounting, clipping, noise calibration,
scaling, or optimizer design. In contrast, SMA-DP-SGD focuses on a complementary
question: how to inject useful historical information into private optimization
without reusing raw historical gradients or obscuring the sensitivity of the
current query.

\paragraph{Memory and adaptive optimization.}
Historical information is central to stochastic optimization. Classical
momentum and accelerated methods exploit past directions to stabilize and speed
up optimization, while adaptive methods such as AdaGrad and Adam use accumulated
gradient statistics to rescale updates
\cite{robbins1951stochastic,polyak1964some,nesterov1983method,
duchi2011adaptive,kingma2015adam}. Private optimizers can also contain
optimizer-state memory, but under differential privacy this memory must be
handled carefully because raw gradients and current-batch-dependent gates can
introduce additional data-dependent pathways. SMA-DP-SGD differs from standard
momentum or Adam-style memory by constructing its memory branch only from
previously privatized noisy releases. This private-release-history construction
allows historical information to influence optimization while preserving a
clean conditional sensitivity structure.

\paragraph{Fractional memory modeling.}
Fractional calculus provides a principled framework for modeling nonlocal and
memory-dependent dynamics through power-law historical kernels
\cite{podlubny1999fractional,kilbas2006theory,diethelm2010analysis,
meerschaert2012stochastic,west2014colloquium}. Compared with short-horizon
exponential moving averages, fractional kernels offer a natural mechanism for
representing longer-range temporal dependence. SMA-DP-SGD uses this perspective
to define a tempered fractional memory state over previous private releases.
The key distinction in our setting is privacy compatibility: the fractional
kernel is applied only to already privatized releases, rather than to raw
historical gradients.

\paragraph{Spectral diagnostics and Heavy-Tailed Self-Regularization.}
The spectra of neural-network weight matrices have been studied in connection
with capacity, optimization geometry, sharpness, and generalization
\cite{bartlett2017spectrally,neyshabur2017exploring,keskar2017large,
pennington2017nonlinear,ghorbani2019investigation,foret2021sharpness}.
Heavy-Tailed Self-Regularization and WeightWatcher-style diagnostics analyze
the empirical spectral density of trained weight matrices and fit power-law
tails to their eigenvalue distributions
\cite{martin2019traditional,martin2021implicit}. Martin, Peng, and Mahoney
showed that power-law spectral metrics can diagnose quality trends of
pretrained neural networks even without access to training or testing data
\cite{martin2021predicting}. More recent work has also used
WeightWatcher-style spectral diagnostics as control or stability signals in
privacy-preserving training and machine unlearning: spectral indicators have
been used to adapt gradient-clipping thresholds in differentially private
training, and layer-wise heavy-tailed spectral diagnostics have been used to
reweight unlearning updates according to statistical roughness
\cite{partohaghighi2026clipping,partohaghighi2026roughness}. Our work uses
these spectral diagnostics in a different role: the power-law exponent is
treated as a layer-wise reliability heuristic for controlling the decay of
private fractional memory. We do not use the spectral exponent as a privacy
mechanism or as a universal certificate of layer quality; instead, it modulates
how aggressively each layer forgets older private releases.

\paragraph{Positioning of SMA-DP-SGD.}
SMA-DP-SGD combines three ideas that have largely been studied separately:
differentially private optimization, fractional memory modeling, and spectral
diagnostics. Existing DP methods typically focus on clipping, noise, privacy
accounting, adaptive optimizers, or scaling strategies; fractional methods
provide memory kernels but are not designed around DP-SGD sensitivity; and
WeightWatcher-style diagnostics are usually used for model analysis rather than
as control signals inside private optimization. SMA-DP-SGD bridges these
directions by building a fractional memory branch from the private release
history and adapting its decay using layer-wise spectral reliability, while
retaining an exact reduction to standard group-wise DP-SGD when \(\beta=1\).
\section{Methodology}
\label{sec:methodology}

We propose \textbf{SMA-DP-SGD}, a \textbf{Spectral Memory-Aware
Differentially Private Stochastic Gradient Descent} method. SMA-DP-SGD extends
DP-SGD with a fractional memory branch computed for each predefined parameter
group from the \emph{private release history}, i.e., the sequence of previously
privatized noisy gradient releases. The memory branch is regulated by spectral
tempering, private-history alignment, warm-up activation, and norm matching.

We formulate SMA-DP-SGD for generic parameter groups and instantiate each
trainable layer as one group in our experiments, allowing the spectral memory
mechanism to operate layer-wise. This distinction is important: the privacy
analysis is group-wise and applies to any predefined partition of the trainable
parameters, while the experimental implementation uses the natural layer-wise
partition induced by convolutional and linear layers.

The key design principle is that, at private step \(t\), the memory branch is
constructed only from previously privatized releases and public hyperparameters.
Consequently, conditioned on the prior private release history, the memory
contribution is fixed, and the only newly data-dependent component of the query
is the current clipped subsampled sum scaled by a fixed coefficient \(\beta\).
This structure preserves an exact reduction to standard DP-SGD when
\(\beta=1\), while yielding a clean conditional sensitivity analysis.

\subsection{Problem setup}
\label{subsec:problem_setup}

Let \(D=\{x_i\}_{i=1}^N\) denote the training dataset and let
\(\ell(\theta;x_i)\) be the loss of a model with trainable parameter vector
\(\theta\). We partition the trainable parameters into \(G\) predefined groups,
\(\theta=(\theta^{(1)},\theta^{(2)},\ldots,\theta^{(G)})\), where
\(g\in\{1,\ldots,G\}\) indexes a parameter group.

This group-wise formulation is general: a group may correspond to a layer, a
block, a module, or any predefined subset of trainable parameters. In our
experiments, we use the layer-wise instantiation, where each trainable
convolutional or linear layer is treated as one group. Therefore, all
group-wise quantities, including \(s_t^{(g)}\), \(C^{(g)}\),
\(\rho_t^{(g)}\), \(\lambda_t^{(g)}\), \(\nu_{t-1}^{(g)}\),
\(\Gamma_t^{(g)}\), and \(\Psi_t^{(g)}\), become layer-wise quantities in the
experimental implementation.

At private step \(t\), a Poisson subsample \(S_t\subseteq D\) is drawn with
sampling probability \(q\), where \(q\) controls the expected lot size
\(L=qN\). For each sampled example \(x_i\in S_t\), the group-wise per-example
gradient is \(g_t^{(g)}(x_i)=\nabla_{\theta^{(g)}}\ell(\theta_t;x_i)\). We
apply group-wise clipping,
\[
\bar g_t^{(g)}(x_i)
=
\frac{g_t^{(g)}(x_i)}
{\max\left(1,\lVert g_t^{(g)}(x_i)\rVert_2/C^{(g)}\right)},
\qquad
\lVert \bar g_t^{(g)}(x_i)\rVert_2\le C^{(g)},
\]
where \(C^{(g)}\) controls the group-wise sensitivity of the clipped
contribution. We define the clipped subsampled sum
\(s_t^{(g)}=\sum_{x_i\in S_t}\bar g_t^{(g)}(x_i)\). For each group \(g\), SMA-DP-SGD maintains the private release history
\(\mathcal H_t^{(g)}=\{\tilde s_0^{(g)},\tilde s_1^{(g)},\ldots,
\tilde s_{t-1}^{(g)}\}\), where \(\tilde s_r^{(g)}\) is the noisy private
release produced at step \(r<t\). The history is empty at \(t=0\).

All memory-dependent quantities at step \(t\) are computed from
\(\mathcal H_t^{(g)}\), the model state induced by previous private updates,
and public hyperparameters. The method does not use raw historical gradients,
current unclipped gradients, or the current clipped sum to construct the memory
branch. Hence, after conditioning on \(\mathcal H_t^{(g)}\), the memory
contribution is fixed. The spectral reliability signal is motivated by Heavy-Tailed
Self-Regularization and WeightWatcher-style diagnostics, which analyze the
empirical spectral density of layer weight matrices and fit power-law tails to
their eigenvalue distributions. Prior work has shown that such power-law
spectral metrics can diagnose quality trends in pretrained neural networks,
including settings where training or testing data are unavailable
\cite{martin2021predicting}. In SMA-DP-SGD, these diagnostics are not used as
privacy mechanisms; rather, they provide group-wise reliability signals for
controlling the decay of private memory. Under the layer-wise instantiation used
in our experiments, these reliability signals are computed per trainable layer.

For spectral diagnostics, a group is assumed to contain a matrix-shaped
trainable weight tensor, or a tensor that can be reshaped into a two-dimensional
matrix, as in convolutional and linear layers. Following WeightWatcher-style
spectral diagnostics~\cite{martin2021predicting}, for a matrix-shaped group
with weight matrix \(W_t^{(g)}\), we form
\(X_t^{(g)}=(W_t^{(g)})^\top W_t^{(g)}\). For convolutional kernels, the kernel
tensor is first reshaped into an appropriate two-dimensional matrix before
forming the corresponding spectral matrix. Let \(\{\lambda_i^{(g)}\}\) denote
the eigenvalues of \(X_t^{(g)}\). We estimate a WeightWatcher-style power-law
exponent \(\rho_t^{(g)}\), which serves as a group-wise spectral reliability
signal, by fitting the upper tail of the empirical eigenvalue distribution,
\(p(\lambda)\propto\lambda^{-\rho_t^{(g)}}\). We use the notation
\(\rho_t^{(g)}\), rather than the conventional power-law symbol \(\alpha\), to
avoid conflict with the fractional memory order \(\alpha\).

The exponent \(\rho_t^{(g)}\) is computed from the current model state
\(\theta_t\), which is induced by previous private releases. Thus, conditioned
on the prior private release history and public hyperparameters,
\(\rho_t^{(g)}\) is fixed for the current query. Consequently, the spectral
tempering mechanism affects only the already-determined memory branch and does
not introduce additional current-batch dependence.

Motivated by the HT-SR/WeightWatcher interpretation, we treat intermediate
power-law exponent ranges as favorable spectral-reliability regions. Let
\(I_\rho=[\rho_{\min},\rho_{\max}]\) denote the spectral reliability interval
used to decide whether a group should retain longer memory or forget older
releases more aggressively. This interval is a tunable design choice rather
than a universal criterion for layer quality. To measure how far the current
spectral exponent lies outside the reliability interval, define
\[
d_t^{(g)}(I_\rho)
=
\max\left(
0,
\rho_{\min}-\rho_t^{(g)},
\rho_t^{(g)}-\rho_{\max}
\right).
\]
Equivalently, with
\(m_\rho=(\rho_{\min}+\rho_{\max})/2\) and
\(h_\rho=(\rho_{\max}-\rho_{\min})/2\), the same deviation can be written
compactly as
\[
d_t^{(g)}(I_\rho)
=
\max\left(
0,
\big |\rho_t^{(g)}-m_\rho\big |-h_\rho
\right).
\]
Thus, \(d_t^{(g)}(I_\rho)=0\) whenever \(\rho_t^{(g)}\in I_\rho\), and otherwise
measures the distance of \(\rho_t^{(g)}\) from the closest endpoint of the
reliability interval. In the default configuration, we use
\(I_\rho^\star=[2,6]\), corresponding to \(m_\rho=4\) and \(h_\rho=2\). We do
not assume that this interval is universally optimal; its role is to define the
reliability range used by the spectral tempering mechanism.

The spectral deviation controls the group-wise tempering coefficient
\[
\lambda_t^{(g)}(I_\rho)
=
1-\exp\left(
-c_\lambda d_t^{(g)}(I_\rho)
\right),
\qquad c_\lambda>0,
\]
where \(c_\lambda\) controls how strongly spectral deviation is converted into
memory tempering. When \(\rho_t^{(g)}\in I_\rho\), the deviation is zero and
\(\lambda_t^{(g)}(I_\rho)=0\), so the corresponding group retains longer
effective memory. When \(\rho_t^{(g)}\notin I_\rho\), the deviation is positive
and \(\lambda_t^{(g)}(I_\rho)>0\), so older private releases are down-weighted
more aggressively. In this work, SMA-DP-SGD uses a shared memory window \(K\), which sets the
maximum number of previous private releases used by the memory branch, and a
shared fractional order \(\alpha\in(0,1]\), which controls the power-law decay
of the fractional memory kernel, across all parameter groups. Group-wise
adaptivity enters through the spectral tempering coefficient
\(\lambda_t^{(g)}(I_\rho)\), which depends on the group-wise spectral exponent
\(\rho_t^{(g)}\). At step \(t\), the number of available memory terms is
\(M_t=\min(K-1,t)\). For \(j=1,\ldots,M_t\), define
\[
a_{t,j}^{(g)}
=
(j+1)^{\alpha-1}
\exp\left(-\lambda_t^{(g)}(I_\rho)j\right)
\quad
\text{and}
\quad 
\hat a_{t,j}^{(g)}
=
\frac{a_{t,j}^{(g)}}
{\sum_{\ell=1}^{M_t}a_{t,\ell}^{(g)}}.
\]
The private fractional memory state,
$\nu_{t-1}^{(g)},$ and
effective memory depth,
$D_{\mathrm{eff},t}^{(g)},$
are given by 
\[
\nu_{t-1}^{(g)}
=
\sum_{j=1}^{M_t}
\hat a_{t,j}^{(g)}
\tilde s_{t-j}^{(g)}
\quad \text{and} \quad 
D_{\mathrm{eff},t}^{(g)}
=
\sum_{j=1}^{M_t}j\hat a_{t,j}^{(g)},
\]
respectively. 
If \(M_t=0\), then \(\nu_{t-1}^{(g)}=0\). The effective memory depth
\(D_{\mathrm{eff},t}^{(g)}\) summarizes how far into the private release history
the group effectively looks. Although \(K\) and \(\alpha\) are shared, the
normalized weights \(\hat a_{t,j}^{(g)}\) and the resulting memory depth remain
group-wise because they depend on the group-wise tempering coefficient
\(\lambda_t^{(g)}(I_\rho)\). Smaller tempering values generally yield longer
effective memory, whereas larger values shorten the memory horizon.
Historical memory can be harmful when it points in a direction inconsistent
with the recent private optimization trend. To suppress stale or contradictory
memory, SMA-DP-SGD constructs a private exponential moving average trend from
previous releases:
\[
\mu_{t-1}^{(g)}
=
\gamma_{\text{ema}}\tilde s_{t-1}^{(g)}
+
(1-\gamma_{\text{ema}})\mu_{t-2}^{(g)},
\qquad
\gamma_{\text{ema}}\in(0,1],
\]
where \(\gamma_{\text{ema}}\) controls the recency of the private EMA trend,
with larger values emphasizing more recent private releases. We initialize
\(\mu_0^{(g)}=\tilde s_0^{(g)}\). At \(t=0\), no previous release exists, and
the memory branch is set to zero.

The private-history alignment gate is
\[
\Gamma_t^{(g)}
=
\max\left(
0,
\frac{
\langle \mu_{t-1}^{(g)},\nu_{t-1}^{(g)}\rangle
}{
\lVert\mu_{t-1}^{(g)}\rVert_2
\lVert\nu_{t-1}^{(g)}\rVert_2+\epsilon
}
\right),
\]
where \(\epsilon>0\) is a numerical stability constant that prevents division
by very small norms. The gate suppresses memory directions that are not aligned
with the recent private trend. Even aligned memory may have an inappropriate
magnitude, so we use the norm-matching scale
\[
\Psi_t^{(g)}
=
\min\left(
\xi_{\max},
\frac{
\lVert\mu_{t-1}^{(g)}\rVert_2
}{
\lVert\nu_{t-1}^{(g)}\rVert_2+\epsilon
}
\right),
\qquad \xi_{\max}>0,
\]
where \(\xi_{\max}\) caps memory amplification induced by norm matching. Both
\(\Gamma_t^{(g)}\) and \(\Psi_t^{(g)}\) are computed only from previous private
releases. Thus, the memory contribution is directionally filtered and
scale-controlled before entering the private query.
\begin{figure}[t]
\centering
\includegraphics[height=8.5cm
]{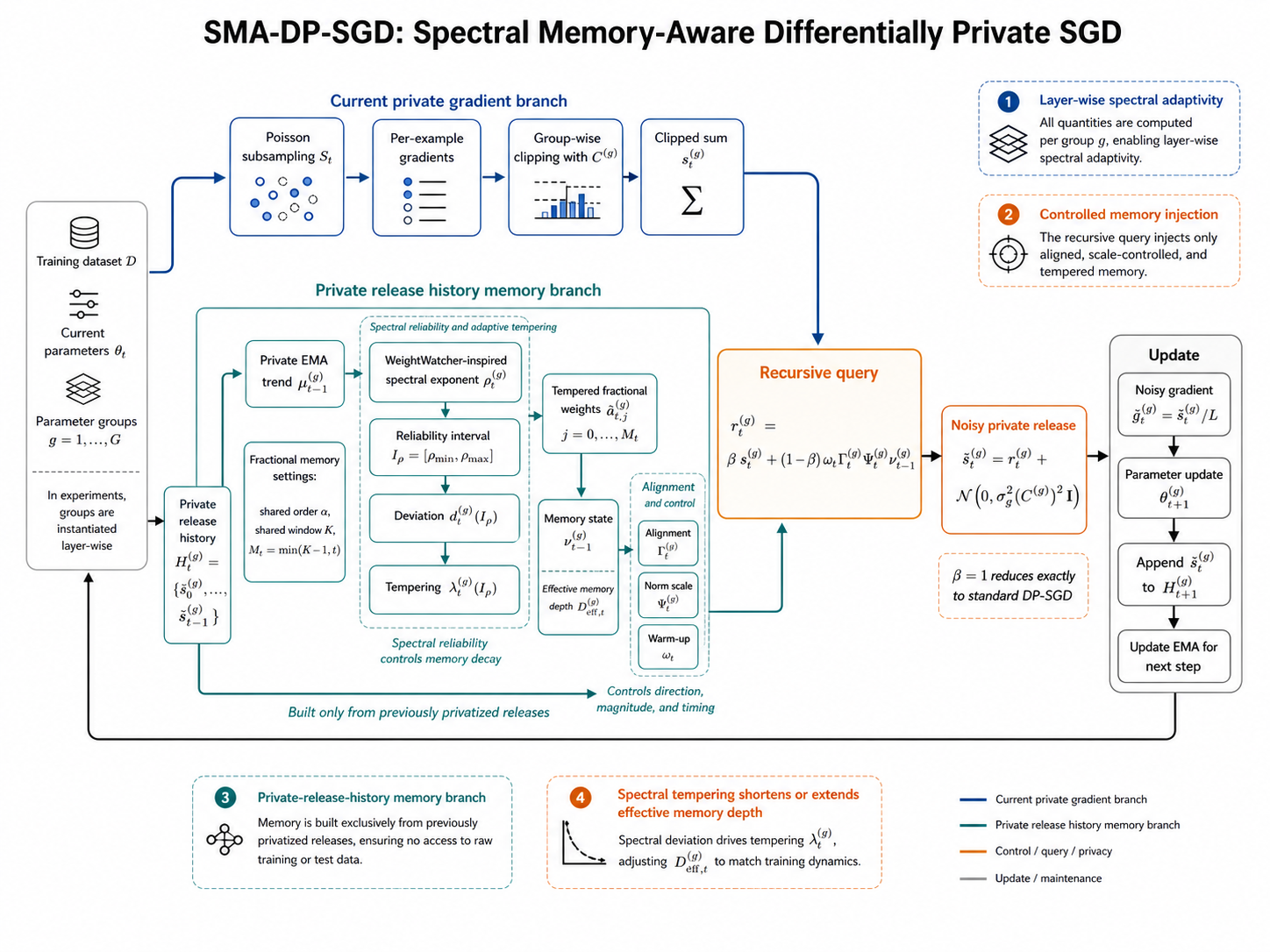}
    \caption{Overview of SMA-DP-SGD, combining a current private gradient branch with a spectrally tempered private-release-history memory branch.}
\label{fig:sma_dp_sgd_overview}
\end{figure}
Algorithm~\ref{alg:sma_dp_sgd} summarizes SMA-DP-SGD. The method uses a
deterministic warm-up coefficient
\(\omega_t=1-\exp(-t/\tau_{\mathrm{warm}})\), where
\(\tau_{\mathrm{warm}}>0\) controls how quickly the memory branch becomes
active, and a fixed current-memory mixing coefficient \(\beta\in(0,1]\), which
controls the interpolation between the current clipped update and the
private-release-history memory branch. The recursive query for group \(g\) is
\[
r_t^{(g)}
=
\beta s_t^{(g)}
+
(1-\beta)\omega_t
\Gamma_t^{(g)}
\Psi_t^{(g)}
\nu_{t-1}^{(g)}.
\]
Equivalently,
\(r_t^{(g)}=\beta s_t^{(g)}+b_t^{(g)}(\mathcal H_t^{(g)})\), where
\(b_t^{(g)}(\mathcal H_t^{(g)})\) is the memory branch computed from the prior
private release history. The case \(\beta=1\) removes the memory branch and
recovers standard group-wise DP-SGD.

The private release is
\(\tilde s_t^{(g)}=r_t^{(g)}+Z_t^{(g)}\), with
\(Z_t^{(g)}\sim\mathcal N(0,\sigma_g^2(C^{(g)})^2I)\), where \(\sigma_g\)
controls the Gaussian noise strength for group \(g\). The noisy gradient
estimate is \(\tilde g_t^{(g)}=\tilde s_t^{(g)}/L\), and the group-wise
parameter update is
\(\theta_{t+1}^{(g)}=\theta_t^{(g)}-\eta \tilde g_t^{(g)}\). After the update,
\(\tilde s_t^{(g)}\) is appended to \(\mathcal H_{t+1}^{(g)}\), and the private
EMA trend is updated for the next step.
\begin{algorithm}[t]
\caption{SMA-DP-SGD with Private Release-History Memory}
\label{alg:sma_dp_sgd}
\tiny
\setlength{\tabcolsep}{2pt}
\begin{algorithmic}[1]
\Require Dataset \(D\), loss \(\ell(\theta;x)\)
\Require Predefined parameter groups
\(\theta=(\theta^{(1)},\ldots,\theta^{(G)})\), instantiated as trainable layers
in the experiments
\Require Sampling rate \(q\), learning rate \(\eta\), clipping norms \(C^{(g)}\),
noise multipliers \(\sigma_g\)
\Require Shared memory window \(K\), shared fractional order \(\alpha\), fixed
mixing coefficient \(\beta\in(0,1]\)
\Require Spectral interval \(I_\rho=[\rho_{\text{min}},\rho_{\text{max}}]\), tempering
constant \(c_\lambda\)
\Require EMA coefficient \(\gamma_{\text{ema}}\), warm-up scale
\(\tau_{\text{warm}}\), norm cap \(\xi_{\text{max}}\), stability constant
\(\epsilon>0\)
\State Initialize \(\theta_0\), histories \(\mathcal H_0^{(g)}\gets\emptyset\),
and trends \(\mu_{-1}^{(g)}\gets0\) for all \(g\)
\For{\(t=0,\ldots,T-1\)}
    \State Draw Poisson subsample \(S_t\) with probability \(q\)
    \State Set \(\omega_t\gets1-\text{exp}(-t/\tau_{\text{warm}})\) and
    \(M_t\gets\text{min}(K-1,t)\)
    \For{\(g=1,\ldots,G\)}
        \State Compute per-example gradients, clip them at \(C^{(g)}\), and form
        \(s_t^{(g)}=\sum_{x_i\in S_t}\bar g_t^{(g)}(x_i)\)
        \If{\(t=0\)}
            \State Set \(\nu_{t-1}^{(g)}\gets0\), \(\Gamma_t^{(g)}\gets0\),
            and \(\Psi_t^{(g)}\gets0\)
        \Else
            \State Estimate \(\rho_t^{(g)}\) from the matrix-shaped weight
            tensor of group \(g\), when applicable
            \State Compute \(d_t^{(g)}(I_\rho)
            \gets \text{max}(0,\rho_{\text{min}}-\rho_t^{(g)},\rho_t^{(g)}-\rho_{\text{max}})\)
            \State Compute
            \(\lambda_t^{(g)}(I_\rho)\gets
            1-\text{exp}(-c_\lambda d_t^{(g)}(I_\rho))\)
            \State For \(j=1,\ldots,M_t\), compute
            \(a_{t,j}^{(g)}\gets
            (j+1)^{\alpha-1}\text{exp}(-\lambda_t^{(g)}(I_\rho)j)\)
            and
            \(\hat a_{t,j}^{(g)}
            \gets a_{t,j}^{(g)}/\sum_{\ell=1}^{M_t}a_{t,\ell}^{(g)}\)
            \State Construct
            \(\nu_{t-1}^{(g)}
            \gets \sum_{j=1}^{M_t}\hat a_{t,j}^{(g)}
            \tilde s_{t-j}^{(g)}\)
            \State Compute
            \(\Gamma_t^{(g)}
            \gets
            \text{max}\!\left(
            0,
            \frac{
            \langle \mu_{t-1}^{(g)},\nu_{t-1}^{(g)}\rangle
            }{
            \lVert\mu_{t-1}^{(g)}\rVert_2
            \lVert\nu_{t-1}^{(g)}\rVert_2+\epsilon
            }
            \right)\)
            \State Compute
            \(\Psi_t^{(g)}
            \gets
            \text{min}\!\left(
            \xi_{\text{max}},
            \frac{
            \lVert\mu_{t-1}^{(g)}\rVert_2
            }{
            \lVert\nu_{t-1}^{(g)}\rVert_2+\epsilon
            }
            \right)\)
        \EndIf
        \State Form
        \(r_t^{(g)}
        \gets
        \beta s_t^{(g)}
        +(1-\beta)\omega_t\Gamma_t^{(g)}
        \Psi_t^{(g)}\nu_{t-1}^{(g)}\)
        \State Release
        \(\tilde s_t^{(g)}
        \gets
        r_t^{(g)}
        +
        \mathcal N(0,\sigma_g^2(C^{(g)})^2I)\)
        \State Update
        \(\theta_{t+1}^{(g)}
        \gets
        \theta_t^{(g)}
        -
        \eta\tilde s_t^{(g)}/L\)
        \State Append \(\tilde s_t^{(g)}\) to \(\mathcal H_{t+1}^{(g)}\)
        \State Update
        \(\mu_t^{(g)}
        \gets
        \tilde s_0^{(g)}\) if \(t=0\), otherwise
        \(\mu_t^{(g)}
        \gets
        \gamma_{\text{ema}}\tilde s_t^{(g)}
        +(1-\gamma_{\text{ema}})\mu_{t-1}^{(g)}\)
    \EndFor
\EndFor
\State \Return \(\theta_T\)
\end{algorithmic}
\end{algorithm}
Figure~\ref{fig:sma_dp_sgd_overview} summarizes the SMA-DP-SGD workflow, where the current clipped private update is combined with a spectrally tempered memory branch built from previously privatized releases.

\section{Experiment}
\label{sec:experiment}

We evaluate SMA-DP-SGD on CIFAR-10 and CIFAR-100~\cite{krizhevsky2009learning}
and MNIST~\cite{lecun1998gradient} against representative differentially
private optimization baselines, including DP-SGD, DP-Adam, DP-AdamW, DP-IS,
DP-SAM, DP-SAT, and DP-Adam-AC. The main experimental section reports the
cross-dataset utility comparison, while additional ablations, stage-wise
spectral diagnostics, final-accuracy statistics, marginal privacy-cost
diagnostics, spectral-interval sensitivity analyses, and runtime comparisons
are provided in Appendix~\ref{app:additional_experiments}. 

\subsection{Cross-Dataset Accuracy Comparison under Differential Privacy}
\label{subsec:cross-dataset-accuracy}

As shown in Fig.~\ref{fig:cross-dataset-accuracy2}, SMA-DP-SGD exhibits strong
and stable performance across CIFAR-100, CIFAR-10, and MNIST. On CIFAR-100,
which is the most challenging benchmark among the three due to its larger
number of classes and higher inter-class similarity, SMA-DP-SGD achieves the
highest final test accuracy among the compared methods. This suggests that the
proposed spectral memory-aware mechanism can improve learning under noisy
private gradients, especially in more complex classification settings where
standard DP optimizers may suffer from unstable or under-optimized convergence.

On CIFAR-10, most adaptive DP optimizers reach relatively similar accuracy
levels after the early training phase. Nevertheless, SMA-DP-SGD remains highly
competitive throughout training and achieves one of the strongest final accuracy
values. This indicates that the proposed method does not merely benefit from
dataset-specific behavior on CIFAR-100, but also generalizes well to moderately
complex image classification tasks.

For MNIST, several adaptive methods, including DP-Adam, DP-AdamW, DP-SAM, and
DP-SAT, quickly reach high accuracy due to the relative simplicity of the
dataset. SMA-DP-SGD also achieves high final accuracy and remains close to the
best-performing methods. This result indicates that the proposed method
preserves stable learning behavior on simpler datasets while offering stronger
advantages on more challenging benchmarks.

\begin{figure}[t]
\centering
\begin{tabular}{ccc}
\hspace{-.15cm}
\includegraphics[
width=4.5cm]
{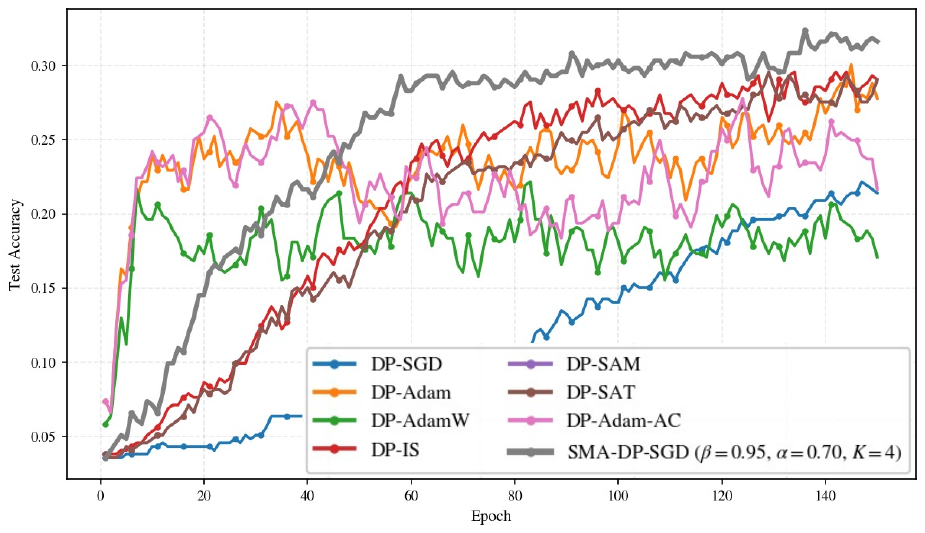}
&
\hspace{-.4cm}
\includegraphics[
width=4.5cm]
{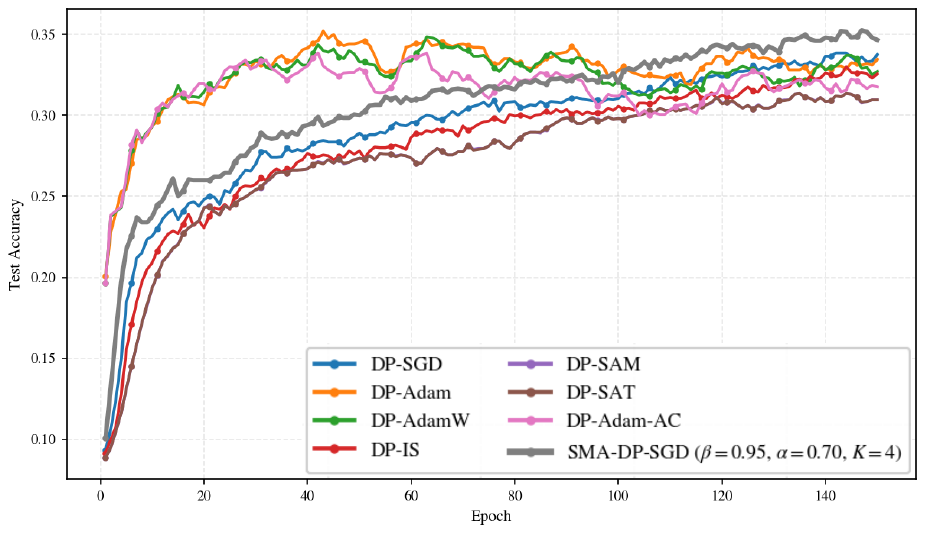}
&
\hspace{-.4cm}
\includegraphics[
width=4.5cm]
{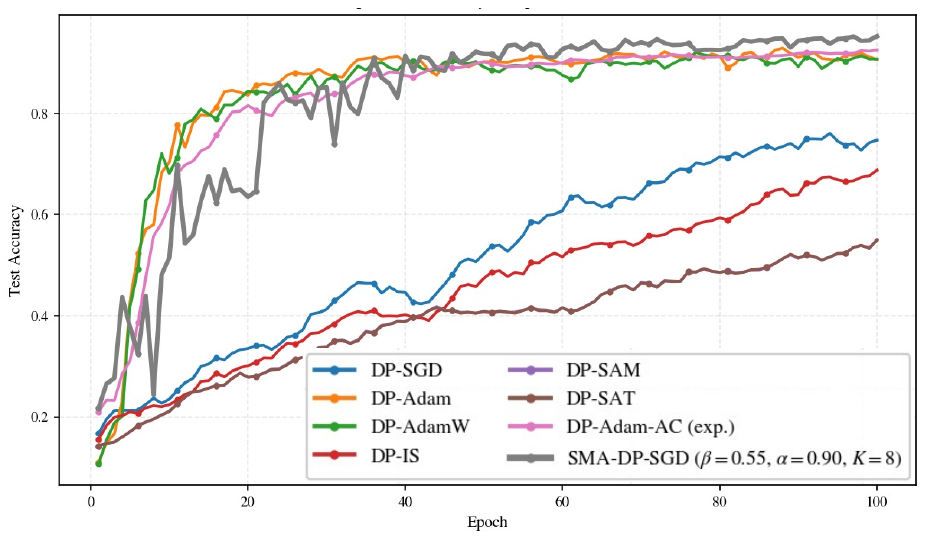}\\
{\small  (a) CIFAR-100} &
{\small (b) CIFAR-10} &
{\small (c) MNIST}
\end{tabular}
\caption{
Cross-dataset comparison of test accuracy versus training epoch for
differentially private optimizers on (a) CIFAR-100, (b) CIFAR-10, and (c) MNIST. SMA-DP-SGD
is compared against representative DP optimization baselines under matched
experimental settings.
}
\label{fig:cross-dataset-accuracy2}
\end{figure}


\section{Conclusion}
\label{sec:conclusion}

This paper introduced SMA-DP-SGD, a spectral memory-aware extension of DP-SGD
that reuses historical information in a privacy-compatible manner. The method
constructs a fractional memory branch only from the private release history,
namely previously privatized noisy gradient releases, rather than from raw
historical gradients. Conditioned on this prior private history, the memory
branch is fixed, so the update preserves a clean sensitivity structure while
retaining an exact reduction to standard group-wise DP-SGD when the memory
contribution is disabled.

SMA-DP-SGD combines fractional memory, WeightWatcher-inspired spectral
tempering, private-history alignment, norm matching, and warm-up activation. In
experiments, the group-wise formulation is instantiated layer-wise, allowing
each trainable convolutional or linear layer to receive a spectral reliability
signal. Across CIFAR-100, CIFAR-10, and MNIST, SMA-DP-SGD achieves competitive
or superior accuracy compared with representative DP optimization baselines,
with particularly strong behavior on more challenging image-classification
settings. Ablations and diagnostics show that the fixed mixing coefficient
controls the privacy--utility behavior, while the spectral-memory mechanism
maintains a controlled effective memory depth and a small memory-branch
contribution.

Several limitations remain. Because the memory branch is built from previous
noisy private releases, it can inherit historical Gaussian noise along with
useful optimization signal. The spectral reliability interval is also a tunable
heuristic rather than a universal criterion for layer quality, and the spectral
diagnostics introduce additional computational overhead. Moreover, full-step
privacy guarantees require conservative joint accounting, while marginal
privacy-cost curves should be interpreted only as diagnostics.

Future work should develop noise-aware private memory mechanisms that reduce
the influence of inherited historical noise while remaining compatible with
differential privacy. Other directions include adaptive spectral reliability
ranges, cheaper spectral diagnostics, and extensions to federated learning,
personalized private optimization, and private fine-tuning of larger models.

\section{Broader Impact}
\label{sec:broader_impact}

SMA-DP-SGD aims to improve the utility of differentially private training by
reusing historical information only through previously privatized releases.
This may benefit privacy-sensitive applications such as healthcare, finance,
mobile learning, and user-behavior modeling, where useful models must be trained
under formal privacy constraints. However, improved private optimization does
not remove the need for careful privacy accounting, transparent reporting of
privacy budgets, and rigorous implementation. In particular, marginal
privacy-cost diagnostics should not be interpreted as full-model privacy
guarantees. The method also introduces additional computational overhead through
spectral diagnostics and memory management. Responsible deployment should
therefore combine SMA-DP-SGD with reproducible evaluation, conservative privacy
analysis, and domain-specific risk assessment.


\bibliographystyle{plainnat}
\bibliography{references}

\newpage

\clearpage
\appendix

\section*{Appendix Overview}
\phantomsection
\label{app:overview}
\addcontentsline{toc}{section}{Appendix Overview}

This appendix provides supplementary experimental details, additional empirical
results, diagnostic analyses, runtime comparisons, and the full theoretical
analysis supporting the main paper. To improve readability and navigation, the
supplementary material is organized into two main parts.
Appendix~\ref{app:additional_experiments} provides additional experiments and
reproducibility details, while Appendix~\ref{app:sma_dp_sgd_theory} presents
the theoretical analysis and interpretation of SMA-DP-SGD.

\paragraph{Appendix~\ref{app:additional_experiments}: Additional Experiments and Reproducibility Details.}
This part complements the main experimental section. It includes the full
experimental setup and reproducibility information, final-accuracy statistics on
CIFAR-10, the diagnostic effect of the fixed mixing parameter on marginal
privacy cost, effective fractional-memory diagnostics, and runtime comparisons.
It also clarifies the layer-wise experimental instantiation of the group-wise
formulation, the use of a shared fractional order \(\alpha\), a shared memory
window \(K\), and the reporting protocol for repeated runs and confidence
intervals.

\paragraph{Appendix~\ref{app:sma_dp_sgd_theory}: Theoretical Analysis and Interpretation of SMA-DP-SGD.}
This part analyzes SMA-DP-SGD from the perspective of recursive-query
sensitivity, conservative joint R\'enyi differential privacy accounting,
adaptive composition, mechanism-level interpretation, signal--memory--noise
decomposition, and limiting regimes. It distinguishes the marginal group-wise
noise-to-sensitivity diagnostic from the formal full-step joint accountant and
establishes the exact reduction to group-wise DP-SGD when the memory
contribution is disabled.

\paragraph{Navigation Guide.}
For convenience, the appendix is organized as follows:
\begin{itemize}
    \item Appendix~\ref{app:additional_experiments}: Additional experiments and reproducibility details.
    \item Appendix~\ref{app:experimental_setup_reproducibility}: Experimental setup, baselines, parameter grouping, and reproducibility protocol.
    \item Appendix~\ref{app:cifar10_final_accuracy_comparison}: Final accuracy comparison on CIFAR-10 with mean, standard deviation, and 95\% confidence intervals.
    \item Appendix~\ref{app:cifar10_beta_ablation_privacy_budget}: Diagnostic effect of the fixed mixing parameter on marginal privacy cost.
    \item Appendix~\ref{app:cifar10_spectral_interval_memory_diagnostics}: Diagnostic analysis of effective fractional memory depth and memory-branch ratio.
    \item Appendix~\ref{app:cifar10_runtime_comparison}: Runtime comparison and computational overhead on CIFAR-10.
    \item Appendix~\ref{app:sma_dp_sgd_theory}: Theoretical analysis and interpretation of SMA-DP-SGD.
    \item Appendix~\ref{subsec:theory_assumptions}: Standing assumptions and notation.
    \item Appendix~\ref{subsec:theory_recursive_sensitivity}: Private-history-conditioned recursive-query sensitivity.
    \item Appendix~\ref{subsec:theory_marginal_vs_joint}: Distinction between marginal group-wise diagnostics and full joint privacy accounting.
    \item Appendix~\ref{subsec:theory_joint_rdp}: Conservative joint per-step RDP accounting.
    \item Appendix~\ref{subsec:theory_adaptive_composition}: Adaptive composition over training.
    \item Appendix~\ref{subsec:theory_signal_memory_noise}: Signal--memory--noise decomposition.
    \item Appendix~\ref{subsec:theory_interpretation}: Interpretation of spectral tempering and private-history gating.
    \item Appendix~\ref{subsec:theory_special_cases}: Special cases and limiting regimes.
\end{itemize}

\clearpage

\clearpage

\appendix

\section{Additional Experiments and Reproducibility Details}
\label{app:additional_experiments}

\subsection{Experimental Setup and Reproducibility}
\label{app:experimental_setup_reproducibility}

\paragraph{Datasets and tasks.}
We evaluate SMA-DP-SGD on three standard image-classification benchmarks:
CIFAR-100, CIFAR-10, and MNIST. These datasets provide different levels of
visual complexity and allow us to assess whether the proposed
private-release-history memory mechanism remains effective across both simple
and more challenging private learning settings.

\paragraph{Baselines.}
We compare SMA-DP-SGD against representative differentially private optimization
baselines, including DP-SGD, DP-Adam, DP-AdamW, DP-IS, DP-SAM, DP-SAT, and
DP-Adam-AC. All methods are evaluated under matched dataset, privacy, model,
and training configurations whenever they appear in the same comparison.

\paragraph{Model and parameter grouping.}
The methodology is formulated for generic parameter groups \(g\). In the
experiments, we instantiate this group-wise formulation layer-wise: each
trainable convolutional or linear layer is treated as one parameter group.
Consequently, group-wise quantities such as \(C^{(g)}\), \(s_t^{(g)}\),
\(\rho_t^{(g)}\), \(\lambda_t^{(g)}(I_\rho)\), \(\nu_{t-1}^{(g)}\),
\(\Gamma_t^{(g)}\), and \(\Psi_t^{(g)}\) are computed per layer. For the
CIFAR-10 spectral-dynamics study, we use a ResNet-18 model and aggregate
layer-wise spectral quantities into stage-wise averages for visualization.

\paragraph{SMA-DP-SGD configuration.}
Unless otherwise stated, SMA-DP-SGD uses a fixed current-memory mixing
coefficient \(\beta\), a shared fractional order \(\alpha\), and a shared memory
window \(K\) across all parameter groups. The memory branch is constructed only
from the private release history, i.e., previously privatized noisy releases,
and is controlled by spectral tempering, private-history alignment, norm
matching, and warm-up activation. The case \(\beta=1\) corresponds to the exact
DP-SGD limiting case with no memory contribution.

\paragraph{Repeated runs and statistical reporting.}
For final-accuracy comparisons on CIFAR-10, each method is evaluated over three
independent runs. We report the mean final test accuracy, sample standard
deviation, and two-sided \(95\%\) confidence interval computed using the
Student-\(t\) interval.

\paragraph{Reproducibility and code availability.}
To support reproducibility, all compared methods use the same dataset split,
model configuration, training horizon, sampling protocol, clipping setup, and
privacy noise configuration within each experiment. The main experimental
configuration is summarized in Table~\ref{tab:appendix_experimental_config}.
The source code is not released at submission time. If the paper is published,
we plan to provide the implementation and scripts upon reasonable request to
support reproduction of the reported results.

\begin{table}[t]
\centering
\caption{
Summary of the main experimental configuration. Hyperparameters that vary
across ablations are reported in the corresponding experiment-specific rows.
}
\label{tab:appendix_experimental_config}
\scriptsize
\setlength{\tabcolsep}{4pt}
\resizebox{\linewidth}{!}{%
\begin{tabular}{ll}
\toprule
Quantity & Value / Description \\
\midrule
Datasets & CIFAR-100, CIFAR-10, MNIST \\
Baselines & DP-SGD, DP-Adam, DP-AdamW, DP-IS, DP-SAM, DP-SAT, DP-Adam-AC \\
Parameter grouping & Layer-wise; each trainable convolutional or linear layer is one group \\
Spectral-dynamics model & ResNet-18 on CIFAR-10 \\
Spectral interval & \(I_\rho=[2,6]\), unless otherwise stated \\
Mixing coefficient & Fixed \(\beta\); ablation uses \(\{1.00,0.90,0.70,0.50\}\) \\
Final-accuracy setting & \(\beta=0.95\), \(\alpha=0.70\), \(K=4\) \\
Spectral-interval diagnostic & Candidate intervals \([1,3]\), \([2,4]\), \([2,6]\), \([3,5]\), \([4,6]\), \([5,7]\) \\
Runtime setting & \(\beta=0.55\), \(\alpha=0.90\), \(K=8\) \\
Repeated runs & Three independent runs for final-accuracy statistics \\
Statistical reporting & Mean, sample standard deviation, and two-sided \(95\%\) Student-\(t\) CI \\
Privacy-cost curves & Marginal group-wise diagnostic using \(\sigma/\beta\) \\
Formal privacy guarantee & Conservative joint accountant in Appendix~\ref{app:sma_dp_sgd_theory} \\
Code availability & Not released at submission time; available upon reasonable request after publication \\
\bottomrule
\end{tabular}%
}
\end{table}

\subsection{Ablation Study on the Fixed Mixing Parameter on CIFAR-10}
\label{app:beta-ablation-cifar10}

To analyze the role of the fixed current-memory mixing parameter \(\beta\), we
evaluate the privacy--utility behavior of SMA-DP-SGD on CIFAR-10 under
different fixed values of \(\beta\). We compare
\(\beta \in \{1.00,0.90,0.70,0.50\}\), where \(\beta=1.00\) corresponds to the
standard DP-SGD limiting case, while smaller values introduce stronger
private-release-history fractional memory effects into the optimization
dynamics. The resulting test accuracy as a function of accumulated privacy
budget is shown in Fig.~\ref{fig:beta-ablation-privacy-accuracy-cifar10}.

\begin{figure}[t]
\centering
\includegraphics[width=0.72\linewidth]{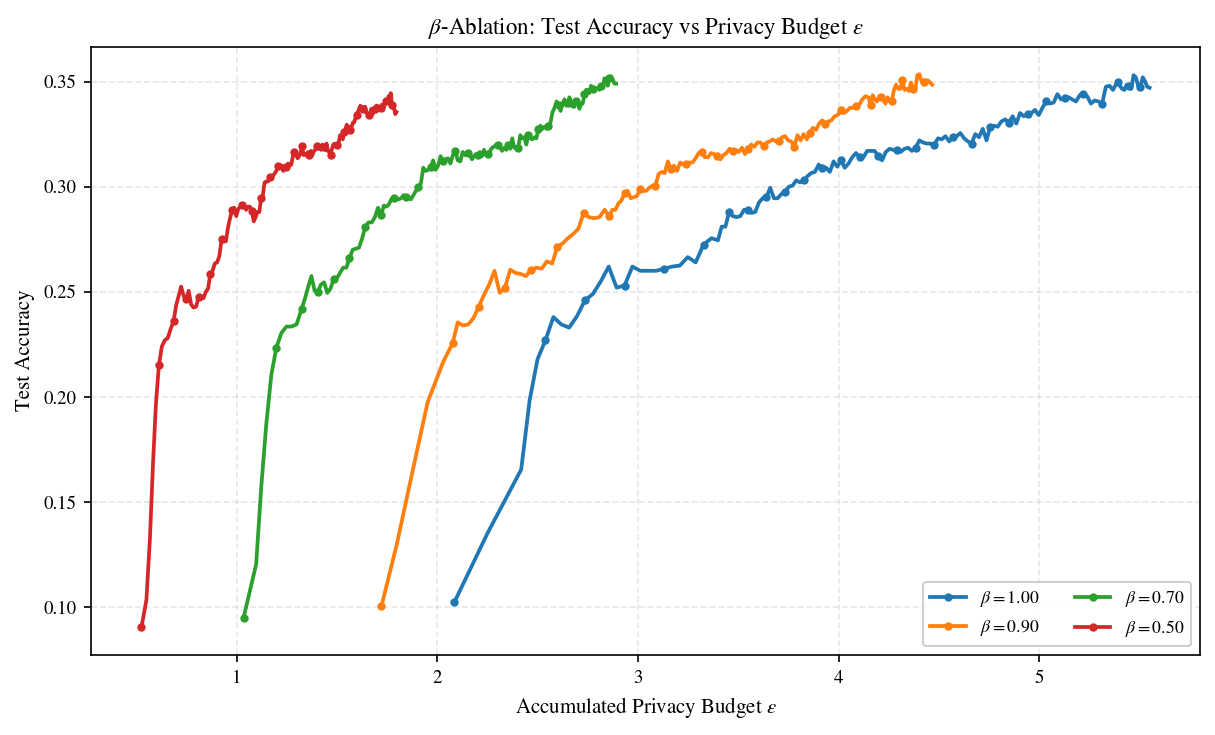}
\caption{
Ablation study of the fixed current-memory mixing parameter \(\beta\) for
SMA-DP-SGD on CIFAR-10. The figure reports test accuracy versus accumulated
privacy budget for \(\beta \in \{1.00,0.90,0.70,0.50\}\).
}
\label{fig:beta-ablation-privacy-accuracy-cifar10}
\end{figure}

As shown in Fig.~\ref{fig:beta-ablation-privacy-accuracy-cifar10}, the choice
of \(\beta\) substantially affects the privacy--utility trajectory. The limiting
case \(\beta=1.00\) requires a larger privacy budget to reach high test
accuracy. In contrast, smaller values of \(\beta\), especially \(\beta=0.70\)
and \(\beta=0.50\), reach comparable or higher accuracy using a smaller privacy
budget in this diagnostic. The curve for \(\beta=0.90\) provides an
intermediate trade-off between the standard DP-SGD update and stronger
memory-based regimes. These results suggest that private-release-history
fractional memory can improve optimization efficiency when the memory
contribution remains controlled.

\subsection{Stage-Wise Spectral Dynamics and Tempering on CIFAR-10}
\label{app:stage_wise_spectral_dynamics}

To further analyze SMA-DP-SGD, we study the evolution of the
WeightWatcher-style spectral exponent and its induced tempering coefficient
across different stages of the CIFAR-10 ResNet-18 model. Since ResNet-18
contains many convolutional and linear parameter groups, directly plotting all
group-wise spectral trajectories would be visually cluttered. Therefore, we
aggregate layer-wise quantities into stage-wise averages.

Let \(\mathcal{G}_{\mathrm{stage}}\) denote the set of convolutional or linear
parameter groups belonging to a given ResNet stage, such as the stem, stage1,
stage2, stage3, stage4, or the final classifier. For each epoch \(t\), we define
the stage-wise spectral exponent as
\begin{equation}
\bar{\rho}_{t}^{(\mathrm{stage})}
=
\frac{1}{|\mathcal{G}_{\mathrm{stage}}|}
\sum_{g \in \mathcal{G}_{\mathrm{stage}}}
\rho_{t}^{(g)}.
\label{eq:stage_avg_rho_experiment}
\end{equation}
Similarly, we define the stage-wise tempering coefficient as
\begin{equation}
\bar{\lambda}_{t}^{(\mathrm{stage})}
=
\frac{1}{|\mathcal{G}_{\mathrm{stage}}|}
\sum_{g \in \mathcal{G}_{\mathrm{stage}}}
\lambda_{t}^{(g)}(I_\rho).
\label{eq:stage_avg_lambda_experiment}
\end{equation}
These diagnostics summarize the spectral evolution and induced memory-decay
policy of each major ResNet component while preserving the layer-aware nature of
SMA-DP-SGD.

The favorable spectral-reliability interval \(I_\rho=[2,6]\) is used as a
heuristic for controlling the memory horizon. Groups closer to this interval
receive weaker tempering and retain longer private-release-history memory,
whereas groups farther from the interval receive stronger tempering and forget
older releases more aggressively.

Figure~\ref{fig:stage_wise_spectral_and_tempering_cifar10} reports both the
stage-wise spectral exponent trajectories and the induced stage-wise tempering
coefficients. The results show that different ResNet components occupy
different spectral regimes during private optimization. The tempering plot
illustrates how these spectral states are translated into stage-dependent
memory-decay coefficients. This supports the use of a layer-wise spectral
memory policy rather than a single uniform memory horizon for all parts of the
network.

The stage-wise variation in \(\bar{\rho}_{t}^{(\mathrm{stage})}\) and
\(\bar{\lambda}_{t}^{(\mathrm{stage})}\) does not imply that the mixing
coefficient \(\beta\) is adaptive. In our formulation, \(\beta\) is fixed and
globally controls the interpolation between the current clipped gradient sum and
the private-release-history memory branch. The adaptive quantities are the
spectral exponent, spectral deviation, tempering coefficient, and effective
memory depth.

\begin{figure}[t]
    \centering
    \begin{subfigure}[t]{0.485\linewidth}
        \centering
        \includegraphics[width=\linewidth]{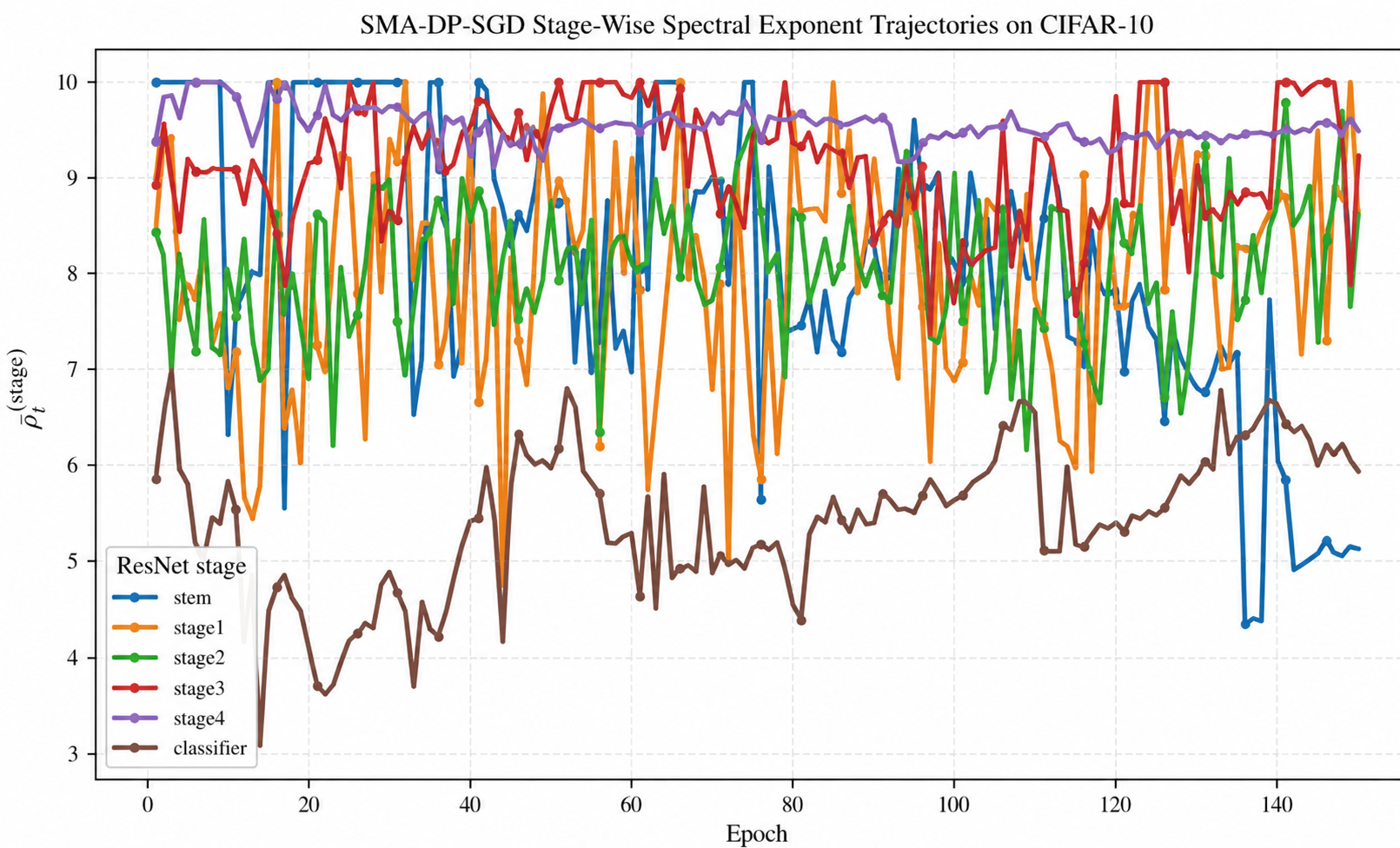}
        \caption{Stage-wise spectral exponent
        \(\bar{\rho}_{t}^{(\mathrm{stage})}\).}
        \label{fig:stage_wise_rho_cifar10}
    \end{subfigure}
    \hfill
    \begin{subfigure}[t]{0.485\linewidth}
        \centering
        \includegraphics[width=\linewidth]{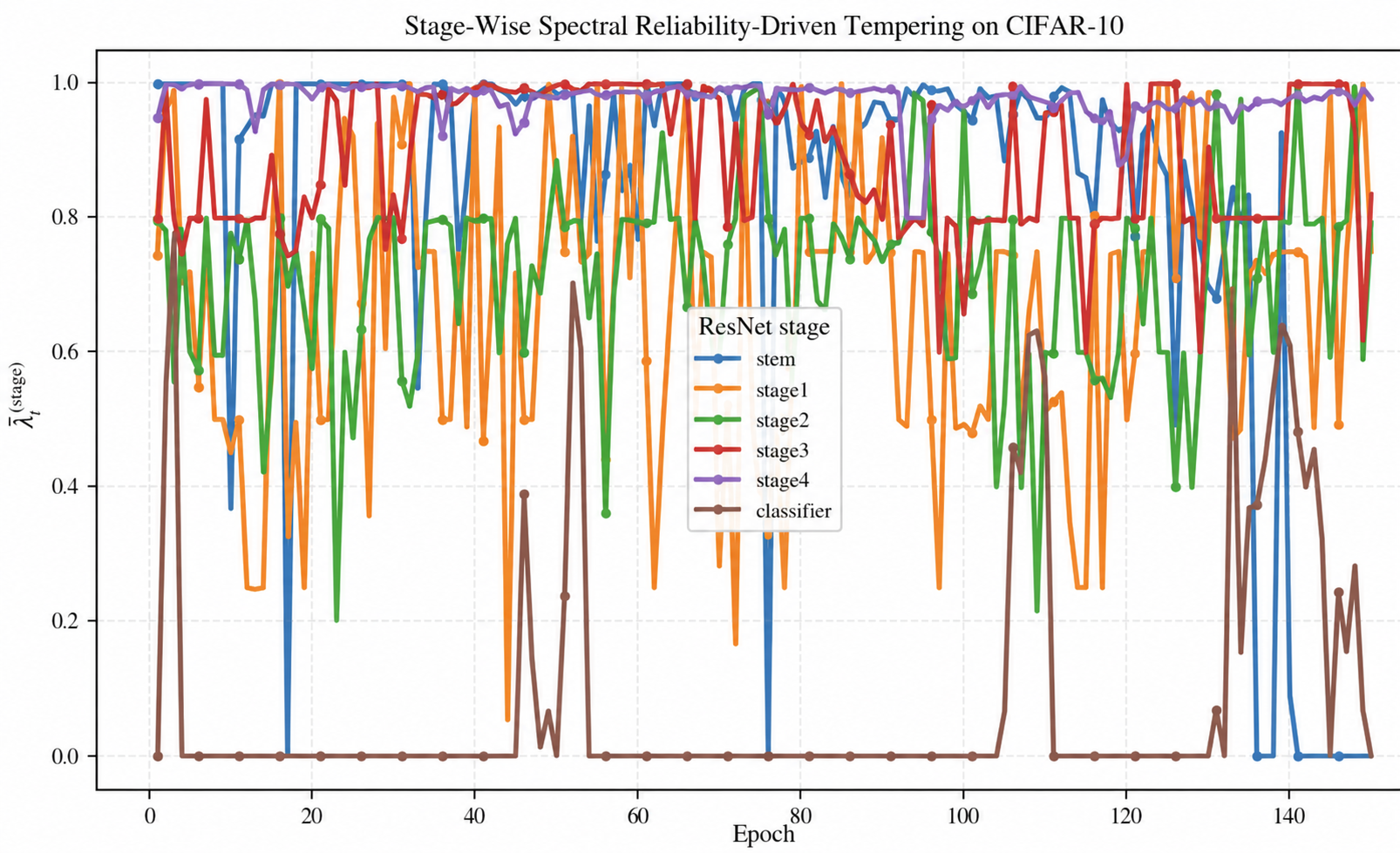}
        \caption{Stage-wise tempering coefficient
        \(\bar{\lambda}_{t}^{(\mathrm{stage})}\).}
        \label{fig:stage_wise_lambda_cifar10}
    \end{subfigure}
    \caption{
    Stage-wise spectral dynamics and reliability-driven tempering of
    SMA-DP-SGD on CIFAR-10. Panel~\subref{fig:stage_wise_rho_cifar10} reports
    the stage-wise average spectral exponent over convolutional or linear
    parameter groups within each ResNet-18 component. Panel~\subref{fig:stage_wise_lambda_cifar10}
    reports the corresponding stage-wise average tempering coefficient.
    }
    \label{fig:stage_wise_spectral_and_tempering_cifar10}
\end{figure}

\subsection{Final Accuracy Comparison on CIFAR-10}
\label{app:cifar10_final_accuracy_comparison}

To evaluate the stability and final predictive performance of SMA-DP-SGD, we
compare its final test accuracy against several differentially private
optimizer baselines on CIFAR-10. Each method is evaluated over three
independent runs, and we report the mean final test accuracy, sample standard
deviation, and a two-sided \(95\%\) confidence interval.

As shown in Table~\ref{tab:cifar10-final-accuracy-mean-std-ci-all},
SMA-DP-SGD achieves the highest final test accuracy among all compared methods.
The proposed method obtains \(0.3463 \pm 0.0003\), with a narrow \(95\%\)
confidence interval of \([0.3456,\,0.3471]\). This indicates not only a higher
mean final accuracy, but also substantially lower run-to-run variability
compared with several DP optimizer baselines.

The improvement is consistent with the intended role of the spectral
memory-aware mechanism. In this experiment, \(\beta=0.95\), \(\alpha=0.70\),
and \(K=4\), meaning that the method remains close to the current private
gradient update while allowing a controlled fractional-memory contribution.

\begin{table}[t]
\centering
\caption{
Final CIFAR-10 test accuracy over three independent runs. We report mean,
sample standard deviation, and two-sided \(95\%\) confidence intervals.
}
\label{tab:cifar10-final-accuracy-mean-std-ci-all}
\scriptsize
\setlength{\tabcolsep}{4pt}
\begin{tabular}{lcc}
\toprule
Algorithm & Mean \(\pm\) std & 95\% CI \\
\midrule
DP-SGD & \(0.3382 \pm 0.0031\) & \([0.3306,\,0.3458]\) \\
DP-Adam & \(0.3295 \pm 0.0095\) & \([0.3058,\,0.3532]\) \\
DP-AdamW & \(0.3203 \pm 0.0070\) & \([0.3029,\,0.3378]\) \\
DP-IS & \(0.3278 \pm 0.0023\) & \([0.3222,\,0.3334]\) \\
DP-SAM & \(0.3215 \pm 0.0105\) & \([0.2954,\,0.3476]\) \\
DP-SAT & \(0.3215 \pm 0.0105\) & \([0.2954,\,0.3476]\) \\
DP-Adam-AC & \(0.3073 \pm 0.0139\) & \([0.2728,\,0.3419]\) \\
SMA-DP-SGD & \(0.3463 \pm 0.0003\) & \([0.3456,\,0.3471]\) \\
\bottomrule
\end{tabular}
\end{table}

\subsection{Diagnostic Effect of the Mixing Parameter on Marginal Privacy Cost}
\label{app:cifar10_beta_ablation_privacy_budget}

We further study the effect of the fixed current-memory mixing parameter
\(\beta\) on the marginal privacy-cost trajectory of SMA-DP-SGD on CIFAR-10.
This diagnostic uses the marginal group-wise ratio \(\sigma/\beta\) to
visualize how \(\beta\) affects the current-query noise-to-sensitivity ratio
for an individual group. It should not be interpreted as the formal full-model
privacy guarantee; the formal guarantee follows the conservative joint
accountant in Appendix~\ref{app:sma_dp_sgd_theory}.

Figure~\ref{fig:cifar10_beta_ablation_epsilon_epoch} shows the resulting
marginal accumulated privacy-cost trajectory as a function of epoch for
different fixed values of \(\beta\). The results follow the expected monotonic
pattern: \(\beta=1.00\), corresponding to the DP-SGD special case with no memory
contribution, has the largest marginal privacy cost. As \(\beta\) decreases,
the marginal privacy cost becomes progressively smaller.

This reduction is not a free privacy improvement: smaller \(\beta\) also
attenuates the direct current-gradient signal and increases reliance on
historical private releases. Therefore, \(\beta\) should be selected by
considering both utility and privacy.

\begin{figure}[t]
    \centering
    \includegraphics[width=0.92\linewidth]{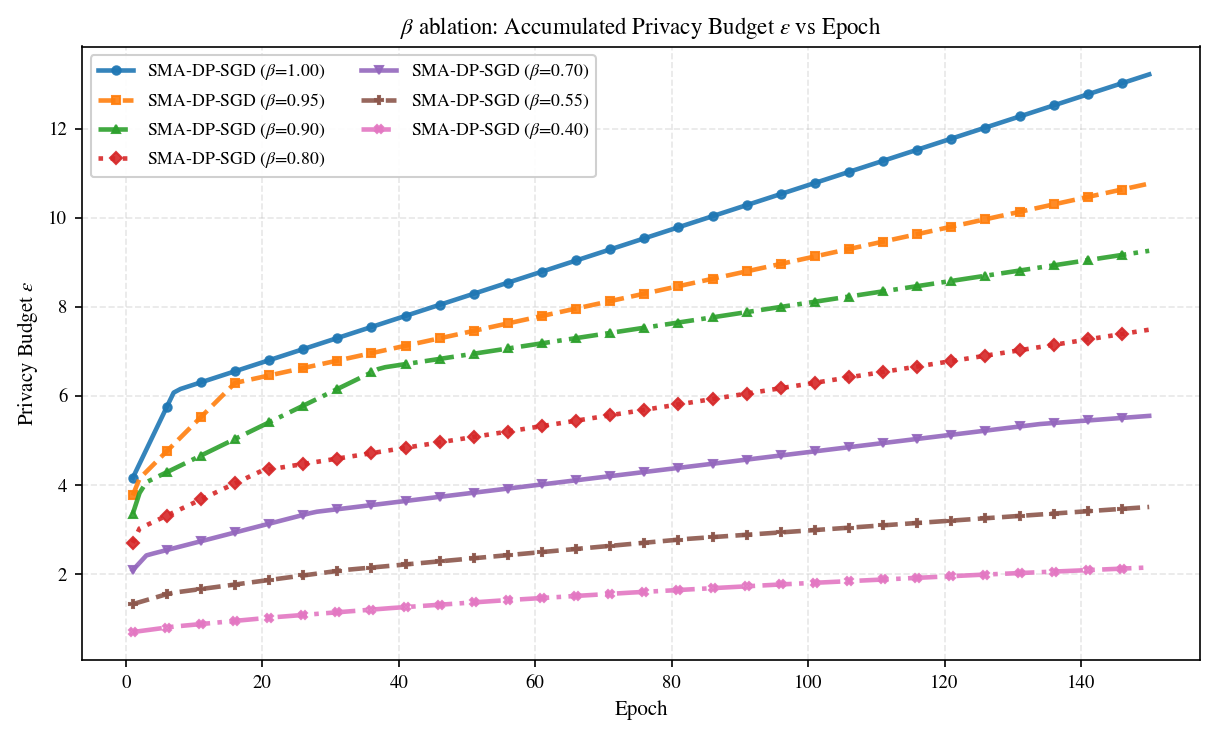}
    \caption{
    Diagnostic effect of the fixed current-memory mixing parameter \(\beta\) on
    the marginal group-wise accumulated privacy cost of SMA-DP-SGD on CIFAR-10.
    The curves use the marginal effective ratio \(\sigma/\beta\). The formal
    full-step privacy guarantee follows the conservative joint accountant in
    Appendix~\ref{app:sma_dp_sgd_theory}.
    }
    \label{fig:cifar10_beta_ablation_epsilon_epoch}
\end{figure}

\subsection{Spectral Interval Sensitivity of Effective Fractional Memory}
\label{app:cifar10_spectral_interval_memory_diagnostics}

To assess the sensitivity of the memory mechanism to the spectral reliability
interval, we evaluate SMA-DP-SGD under different choices of \(I_\rho\). The
interval \(I_\rho\) determines which spectral exponent values are treated as
reliable by the memory controller. Intervals that include the observed
layer-wise spectral exponents more often tend to produce weaker tempering and
therefore longer effective memory, whereas intervals farther from the observed
spectral regime induce stronger tempering and shorter memory depth.

\begin{table}[t]
\centering
\caption{
Sensitivity of SMA-DP-SGD to the spectral reliability interval \(I_\rho\) on
CIFAR-10. The table reports the mean effective memory depth and the mean
memory-branch ratio for different choices of \(I_\rho\).
}
\label{tab:cifar10-spectral-interval-memory-diagnostics}
\setlength{\tabcolsep}{4pt}
\begin{tabular}{lcc}
\toprule
Spectral interval \(I_\rho\) & Mean \(D_{\mathrm{eff}}\) & Mean memory ratio \\
\midrule
\([1,3]\) & \(2.7636\) & \(0.0065\) \\
\([2,4]\) & \(3.0902\) & \(0.0136\) \\
\([2,6]\) & \(3.2079\) & \(0.0133\) \\
\([3,5]\) & \(2.7487\) & \(0.0113\) \\
\([4,6]\) & \(1.9857\) & \(0.0072\) \\
\([5,7]\) & \(1.6749\) & \(0.0027\) \\
\bottomrule
\end{tabular}%
\end{table}

As reported in Table~\ref{tab:cifar10-spectral-interval-memory-diagnostics},
the effective memory depth varies noticeably across spectral intervals. The
interval \([2,6]\) yields the largest mean effective memory depth, followed
closely by \([2,4]\), indicating that these intervals allow the
private-release-history memory branch to retain a longer temporal horizon. In
contrast, intervals such as \([4,6]\) and \([5,7]\) produce shorter effective
memory depths, suggesting stronger spectral tempering and faster decay of older
private releases. The mean memory ratio remains small for all intervals,
indicating that the memory branch acts as a bounded correction rather than
dominating the private update.

\subsection{Runtime Comparison on CIFAR-10}
\label{app:cifar10_runtime_comparison}

To assess the computational overhead of SMA-DP-SGD, we compare the wall-clock
runtime of the proposed method against differentially private optimizer
baselines on CIFAR-10. Runtime is measured for the full training procedure
under the same dataset size, model configuration, privacy parameters, and
number of epochs.

As shown in Table~\ref{tab:cifar10-relative-runtime}, DP-SGD, DP-Adam, and
DP-AdamW have nearly identical runtimes. DP-IS incurs a modest overhead of
\(1.07\times\), while DP-SAM and DP-SAT require approximately \(1.86\times\)
and \(1.81\times\) the runtime of DP-SGD, respectively. SMA-DP-SGD has the
largest runtime in this comparison, requiring \(2.94\times\) the runtime of
DP-SGD. This overhead comes from the layer-wise spectral diagnostics, the
maintenance of the private release history, and the computation of the tempered
fractional memory branch for each parameter group.

\begin{table}[t]
\centering
\caption{
Relative runtime comparison on CIFAR-10. Runtime is measured in seconds and
normalized by DP-SGD.
}
\label{tab:cifar10-relative-runtime}
\setlength{\tabcolsep}{3pt}
\begin{tabular}{lccc}
\toprule
Algorithm & Mean (s) & Mean \(\pm\) std (s) & Rel. runtime \\
\midrule
DP-SGD & \( \ \ 92.66\) & \( \ \ 92.66 \pm 0.00\) & \(1.00\times\) \\
DP-Adam & \( \ \ 92.75\) & \( \ \ 92.75 \pm 0.00\) & \(1.00\times\) \\
DP-AdamW & \(\ \ 91.83\) & \( \ \ 91.83 \pm 0.00\) & \(0.99\times\) \\
DP-IS & \(\ \ 98.84\) & \( \ \ 98.84 \pm 0.00\) & \(1.07\times\) \\
DP-SAM & \(172.25\) & \(172.25 \pm 0.00\) & \(1.86\times\) \\
DP-SAT & \(167.63\) & \(167.63 \pm 0.00\) & \(1.81\times\) \\
SMA-DP-SGD (\(\beta=0.55\), \(\alpha=0.90\), \(K=8\))
& \(272.52\) & \(272.52 \pm 0.00\) & \(2.94\times\) \\
\bottomrule
\end{tabular}%
\end{table}

\section{Theoretical Analysis and Interpretation of SMA-DP-SGD}
\label{app:sma_dp_sgd_theory}

This section analyzes SMA-DP-SGD from the perspectives of recursive-query
sensitivity, conservative joint R\'enyi differential privacy (RDP) accounting,
adaptive composition, mechanism-level interpretation, memory--noise
decomposition, and limiting regimes. The analysis corresponds to the
private-release-history version of SMA-DP-SGD.

For each parameter group \(g\), the recursive query is
\begin{equation}
r_t^{(g)}(D;m_t,\mathcal H_t)
=
\beta s_t^{(g)}(D;m_t)
+
b_t^{(g)}(\mathcal H_t),
\label{eq:theory_recursive_query_global}
\end{equation}
where \(s_t^{(g)}(D;m_t)\) is the current group-wise clipped subsampled sum and
\begin{equation}
b_t^{(g)}(\mathcal H_t)
=
(1-\beta)\omega_t
\Gamma_t^{(g)}
\Psi_t^{(g)}
\nu_{t-1}^{(g)}
\label{eq:theory_memory_branch_global}
\end{equation}
is the memory branch computed from the prior private release history.

The global private release history is
\begin{equation}
\mathcal H_t
=
\left(
\mathcal H_t^{(1)},\ldots,\mathcal H_t^{(G)}
\right),
\qquad
\mathcal H_t^{(g)}
=
\left(
\tilde s_0^{(g)},\ldots,\tilde s_{t-1}^{(g)}
\right).
\label{eq:theory_global_history}
\end{equation}
This private release history is the private transcript in the adaptive
composition view. We condition on the global history rather than only the
group-wise history because the current model state \(\theta_t\), spectral
diagnostics, private EMA trends, and group-wise gradients may depend on all
previous group-wise private releases. The central private-release-history
property is that
\[
\nu_{t-1}^{(g)},\quad
\Gamma_t^{(g)},\quad
\Psi_t^{(g)},\quad
\lambda_t^{(g)}(I_\rho),\quad
\mu_{t-1}^{(g)}
\]
are deterministic functions of \(\mathcal H_t\), the model state induced by
previous private releases, and public hyperparameters. They do not depend on
the current clipped sum \(s_t^{(g)}\). Consequently, once \(\mathcal H_t\) is
fixed, the entire memory branch \(b_t^{(g)}(\mathcal H_t)\) is fixed, and the
only newly data-dependent component of \(r_t^{(g)}\) is the scaled clipped sum
\(\beta s_t^{(g)}\).

\subsection{Standing Assumptions and Notation}
\label{subsec:theory_assumptions}

\paragraph{Add/remove adjacency.}
We use add/remove adjacency. Two datasets \(D\) and \(D'\) are adjacent,
denoted \(D\sim D'\), if one can be obtained from the other by adding or
removing one example.

\paragraph{Poisson subsampling.}
At private step \(t\), a Poisson mask \(m_t=(m_{t,1},\ldots,m_{t,N})\) is drawn
with
\[
m_{t,i}\sim\mathrm{Bernoulli}(q_t)
\]
independently across examples. The sampled set is
\[
S_t=\{i:m_{t,i}=1\}.
\]
The same sampling event is used for the joint vector release across all
parameter groups at step \(t\).

\paragraph{Group-wise clipping.}
For each group \(g\), the per-example gradient is clipped as
\begin{equation}
\bar g_t^{(g)}(x_i)
=
\frac{
g_t^{(g)}(x_i)
}{
\max\left(
1,
\frac{\norm{g_t^{(g)}(x_i)}_2}{C^{(g)}}
\right)
},
\label{eq:theory_group_clipping}
\end{equation}
so that
\begin{equation}
\norm{\bar g_t^{(g)}(x_i)}_2\le C^{(g)}.
\label{eq:theory_group_clip_bound}
\end{equation}
The group-wise clipped subsampled sum is
\begin{equation}
s_t^{(g)}(D;m_t)
=
\sum_{i=1}^N m_{t,i}\bar g_t^{(g)}(x_i).
\label{eq:theory_group_clipped_sum}
\end{equation}

\paragraph{Private-release-history memory branch.}
For each group \(g\), the memory branch is
\[
b_t^{(g)}(\mathcal H_t)
=
(1-\beta)\omega_t
\Gamma_t^{(g)}
\Psi_t^{(g)}
\nu_{t-1}^{(g)}.
\]
All quantities inside this branch are deterministic functions of the global
prior private release history \(\mathcal H_t\), the current model state induced
by previous private releases, and public hyperparameters. In particular, the
private-history alignment gate \(\Gamma_t^{(g)}\) and norm scale
\(\Psi_t^{(g)}\) do not depend on the current clipped sum \(s_t^{(g)}\).

\paragraph{Shared and group-wise memory parameters.}
The fractional order \(\alpha\in(0,1]\) and memory window \(K\) are shared
across parameter groups. The number of available lags at step \(t\) is therefore
\[
M_t=\min(K-1,t).
\]
Group-wise adaptivity of the memory kernel enters through the spectral
tempering coefficient \(\lambda_t^{(g)}(I_\rho)\), which depends on the
group-wise spectral exponent \(\rho_t^{(g)}\).

\paragraph{Gaussian perturbation.}
For each group \(g\), the group-wise release is
\begin{equation}
\tilde s_t^{(g)}
=
r_t^{(g)}
+
Z_t^{(g)},
\qquad
Z_t^{(g)}
\sim
\mathcal N\left(
0,
\sigma_{t,g}^2(C^{(g)})^2 I_g
\right),
\label{eq:theory_group_release}
\end{equation}
where \(I_g\) is the identity matrix with the dimension of group \(g\). Noise
is independent across groups conditional on the current step.

\subsection{Fixed-Mask Group-Wise Clipped-Sum Sensitivity}
\label{subsec:theory_clipped_sum_sensitivity}

We first recall the fixed-mask sensitivity of the group-wise clipped sum.

\begin{lemma}[Fixed-mask sensitivity of the group-wise clipped sum]
\label{lem:theory_group_sum_sensitivity}
Fix an iteration \(t\), a group \(g\), a model state \(\theta_t\), and a
sampling mask \(m_t\). Under add/remove adjacency,
\begin{equation}
\Delta_s^{(g)}(m_t)
:=
\sup_{D\sim D'}
\norm{
s_t^{(g)}(D;m_t)-s_t^{(g)}(D';m_t)
}_2
\le
C^{(g)}.
\label{eq:theory_group_sum_sensitivity}
\end{equation}
\end{lemma}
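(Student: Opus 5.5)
The argument is a direct telescoping one: under add/remove adjacency with a fixed mask and fixed model state, the two clipped sums differ in at most one summand, and that summand has norm at most \(C^{(g)}\) by the clipping bound~\eqref{eq:theory_group_clip_bound}.

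\textbf{Setting up the comparison.} Without loss of generality, let \(D'=D\cup\{x_\star\}\); the removal case is symmetric because the supremum in~\eqref{eq:theory_group_sum_sensitivity} is taken over unordered adjacent pairs. To compare the two sums under one mask \(m_t\), I index examples by identity. The mask is a Bernoulli variable attached to each example, so the common examples of \(D\) and \(D'\) carry the same bits \(m_{t,i}\). The extra example \(x_\star\) carries its own bit \(m_{t,\star}\in\{0,1\}\). This convention is the part I expect to need the most care: "fixed mask" must be read as coupling the sampling events of shared examples across \(D\) and \(D'\). It is consistent with Poisson subsampling, where inclusion decisions are independent per example. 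The model state \(\theta_t\) is held fixed, so each per-example clipped gradient \(\bar g_t^{(g)}(x_i)\) is the same function of \(x_i\) in both datasets.

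\textbf{Cancellation.} Expanding~\eqref{eq:theory_group_clipped_sum} for both datasets, every term indexed by a common example appears identically in each sum and cancels. The difference is therefore
\[
s_t^{(g)}(D';m_t)-s_t^{(g)}(D;m_t)=m_{t,\star}\,\bar g_t^{(g)}(x_\star).
\]

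\textbf{Bounding the remaining term.} Since \(m_{t,\star}\in\{0,1\}\), the clipping bound gives
\[
\norm{s_t^{(g)}(D';m_t)-s_t^{(g)}(D;m_t)}_2\le \norm{\bar g_t^{(g)}(x_\star)}_2\le C^{(g)}.
\]
This holds for every adjacent pair, so taking the supremum yields \(\Delta_s^{(g)}(m_t)\le C^{(g)}\).

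Beyond fixing the mask convention, the proof requires no further work.
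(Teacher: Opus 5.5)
Your proof is correct and follows the same route as the paper's: with the mask fixed, adjacent datasets' clipped sums differ in at most one summand, and that summand has norm at most \(C^{(g)}\) by the clipping bound. Your convention of indexing mask bits by example identity, with shared examples carrying identical bits and the extra example getting its own bit, makes precise what the paper's one-line argument leaves implicit.
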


\begin{proof}
Under add/remove adjacency, the datasets \(D\) and \(D'\) differ in at most one
example. For a fixed sampling mask \(m_t\), the two clipped sums can therefore
differ in at most one clipped per-example contribution in group \(g\). By
Eq.~\eqref{eq:theory_group_clip_bound}, this contribution has norm at most
\(C^{(g)}\). Hence
\[
\norm{
s_t^{(g)}(D;m_t)-s_t^{(g)}(D';m_t)
}_2
\le
C^{(g)}.
\]
Taking the supremum over adjacent datasets gives the result.
\end{proof}

The fixed-mask argument gives a conditional sensitivity bound. The randomness
of Poisson subsampling, and the corresponding privacy amplification, are
accounted for later through the subsampled Gaussian RDP bound.

\subsection{Private-History-Conditioned Recursive-Query Sensitivity}
\label{subsec:theory_recursive_sensitivity}

The next result formalizes why the private-release-history construction yields
a clean sensitivity bound.

\begin{proposition}[Global private-history-conditioned memory branch]
\label{prop:theory_global_history_memory_fixed}
Fix an iteration \(t\), a group \(g\), a realizable global prior private release
history \(\mathcal H_t\), and public hyperparameters. Then
\(b_t^{(g)}(\mathcal H_t)\) is fixed under this conditioning. Consequently, the
only newly data-dependent term in
\[
r_t^{(g)}(D;m_t,\mathcal H_t)
=
\beta s_t^{(g)}(D;m_t)
+
b_t^{(g)}(\mathcal H_t)
\]
is the scaled clipped sum \(\beta s_t^{(g)}(D;m_t)\).
\end{proposition}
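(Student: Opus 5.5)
The argument is a measurability/composition check: we show that every factor of \(b_t^{(g)}(\mathcal H_t)\) is a deterministic function of \(\mathcal H_t\), the public hyperparameters, and the initialization \(\theta_0\). We then observe that \(s_t^{(g)}(D;m_t)\) is the only remaining input to \(r_t^{(g)}\) that involves \(D\). We proceed by unrolling the recursion of Algorithm~\ref{alg:sma_dp_sgd} and arguing by induction on \(t\).

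\textbf{Step 1 (model state).} We first show that \(\theta_t\) is a deterministic function of \(\theta_0\), \(\eta\), \(L\), and \(\mathcal H_t\). The update rule gives
\[
\theta_t^{(g)}=\theta_0^{(g)}-\frac{\eta}{L}\sum_{r=0}^{t-1}\tilde s_r^{(g)}
\]
for every group, and all \(\tilde s_r^{(g)}\) with \(r<t\) are components of \(\mathcal H_t\). This is where conditioning on the \emph{global} history, rather than the group-wise history, matters: \(\theta_t\) as a whole is determined by all groups' releases.

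\textbf{Step 2 (memory ingredients).} Given Step 1, the weight matrix \(W_t^{(g)}\) is fixed, and hence so are \(X_t^{(g)}\) and its spectrum. The exponent \(\rho_t^{(g)}\) is therefore fixed, provided the power-law fitting procedure is deterministic; if it uses internal randomness, we include that seed among the public hyperparameters. It follows that \(d_t^{(g)}(I_\rho)\), \(\lambda_t^{(g)}(I_\rho)\), the weights \(a_{t,j}^{(g)}\), \(\hat a_{t,j}^{(g)}\), and \(M_t=\min(K-1,t)\) are all fixed. Next, \(\nu_{t-1}^{(g)}=\sum_j \hat a_{t,j}^{(g)}\tilde s_{t-j}^{(g)}\) uses only the lags \(t-j\le t-1\), so it is fixed. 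By induction on the EMA recursion, with \(\mu_0^{(g)}=\tilde s_0^{(g)}\), the trend \(\mu_{t-1}^{(g)}\) depends only on \(\tilde s_0^{(g)},\dots,\tilde s_{t-1}^{(g)}\). Then \(\Gamma_t^{(g)}\) and \(\Psi_t^{(g)}\) are fixed, since they are continuous functions of \(\mu_{t-1}^{(g)}\) and \(\nu_{t-1}^{(g)}\) (well defined because \(\epsilon>0\)). Finally, \(\omega_t\) and \(\beta\) are public constants, so \(b_t^{(g)}(\mathcal H_t)\) is fixed. At \(t=0\) the branch is identically zero by definition.

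\textbf{Step 3 (conclusion).} We decompose \(r_t^{(g)}=\beta s_t^{(g)}(D;m_t)+b_t^{(g)}(\mathcal H_t)\). The second summand has just been shown to be constant under the conditioning. The first is the only term that reads the data through the fresh mask \(m_t\) and the current per-example gradients; its sensitivity is \(\beta C^{(g)}\) by Lemma~\ref{lem:theory_group_sum_sensitivity}, although we need only the structural claim here.

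The main obstacle is conceptual rather than computational. Quantities such as \(\theta_t\) do depend on \(D\) \emph{unconditionally}, through earlier releases. The proposition must therefore be phrased as "fixed given a realizable \(\mathcal H_t\)," which is exactly what the adaptive composition framework requires. We also need to state explicitly that no hidden state (raw gradients, or randomness other than public seeds) enters the spectral fit or the EMA. I would handle this by listing, in Step 2, the exact inputs of each map and checking that each input belongs to \(\mathcal H_t\) or to the public set.
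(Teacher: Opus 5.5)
Your proposal is correct and takes the same route as the paper's proof. Both show that every factor of \(b_t^{(g)}(\mathcal H_t)\) — namely \(\nu_{t-1}^{(g)}\), \(\mu_{t-1}^{(g)}\), \(\lambda_t^{(g)}(I_\rho)\), \(\Gamma_t^{(g)}\), \(\Psi_t^{(g)}\), together with the public \(\omega_t\) and \(\beta\) — is a deterministic function of \(\mathcal H_t\), the induced model state, and public hyperparameters, so only \(\beta s_t^{(g)}(D;m_t)\) carries fresh data dependence. The paper simply asserts this and treats the ``model state induced by previous private releases'' as a given input, whereas you make it explicit by unrolling \(\theta_t^{(g)}=\theta_0^{(g)}-\frac{\eta}{L}\sum_{r<t}\tilde s_r^{(g)}\) and by requiring a deterministic (or publicly seeded) spectral fit and a data-independent \(\theta_0\).
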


\begin{proof}
By the private-release-history design, the fractional memory state
\(\nu_{t-1}^{(g)}\), private EMA trend \(\mu_{t-1}^{(g)}\), spectral tempering
coefficient \(\lambda_t^{(g)}(I_\rho)\), alignment gate \(\Gamma_t^{(g)}\), and
norm scale \(\Psi_t^{(g)}\) are deterministic functions of the global prior
private release history \(\mathcal H_t\), the model state induced by previous
private releases, and public hyperparameters. Therefore, once \(\mathcal H_t\)
is fixed, all these quantities are fixed. Their product in
Eq.~\eqref{eq:theory_memory_branch_global} is also fixed. Hence the only newly
data-dependent term in the recursive query is
\(\beta s_t^{(g)}(D;m_t)\).
\end{proof}

\begin{definition}[Private-history- and mask-conditioned group-wise recursive-query sensitivity]
\label{def:theory_group_recursive_sensitivity}
For fixed \(\mathcal H_t\) and \(m_t\), define
\begin{equation}
\Delta_r^{(g)}(m_t,\mathcal H_t)
=
\sup_{D\sim D'}
\norm{
r_t^{(g)}(D;m_t,\mathcal H_t)
-
r_t^{(g)}(D';m_t,\mathcal H_t)
}_2.
\label{eq:theory_group_recursive_sensitivity_def}
\end{equation}
\end{definition}

\begin{theorem}[Conditional sensitivity of SMA-DP-SGD]
\label{thm:theory_group_recursive_sensitivity}
Fix an iteration \(t\), a group \(g\), a realizable global prior private release
history \(\mathcal H_t\), and a sampling mask \(m_t\). Under add/remove
adjacency,
\begin{equation}
\Delta_r^{(g)}(m_t,\mathcal H_t)
\le
\beta C^{(g)}.
\label{eq:theory_group_recursive_sensitivity_bound}
\end{equation}
\end{theorem}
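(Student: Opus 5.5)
The plan is to reduce the recursive-query sensitivity directly to the fixed-mask clipped-sum sensitivity of Lemma~\ref{lem:theory_group_sum_sensitivity}, using Proposition~\ref{prop:theory_global_history_memory_fixed} to remove the memory branch. First, fix $t$, $g$, a realizable global history $\mathcal H_t$, and a mask $m_t$, and take an arbitrary adjacent pair $D\sim D'$. By Proposition~\ref{prop:theory_global_history_memory_fixed}, the memory branch $b_t^{(g)}(\mathcal H_t)$ is a deterministic function of $\mathcal H_t$, the model state induced by previous releases, and public hyperparameters. It therefore takes the same value whether the current query is evaluated on $D$ or on $D'$. Subtracting the two instances of Eq.~\eqref{eq:theory_recursive_query_global}, the memory term cancels exactly:
\[
r_t^{(g)}(D;m_t,\mathcal H_t)-r_t^{(g)}(D';m_t,\mathcal H_t)
=
\beta\bigl(s_t^{(g)}(D;m_t)-s_t^{(g)}(D';m_t)\bigr).
\]

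Second, take norms. Since $\beta\in(0,1]$ is a fixed public constant, homogeneity of the norm gives $\beta\,\norm{s_t^{(g)}(D;m_t)-s_t^{(g)}(D';m_t)}_2$. Lemma~\ref{lem:theory_group_sum_sensitivity} bounds this by $\beta C^{(g)}$. Taking the supremum over $D\sim D'$ in Definition~\ref{def:theory_group_recursive_sensitivity} yields Eq.~\eqref{eq:theory_group_recursive_sensitivity_bound}.

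The main obstacle is justifying that the same $\theta_t$, and hence the same memory quantities, are used under $D$ and $D'$. These quantities depend on past releases, which under the two datasets are different random variables. The resolution is that the theorem conditions on a fixed realization $\mathcal H_t$. In the adaptive-composition view, $\theta_t$, $\rho_t^{(g)}$, $\lambda_t^{(g)}(I_\rho)$, $\mu_{t-1}^{(g)}$, $\nu_{t-1}^{(g)}$, $\Gamma_t^{(g)}$ and $\Psi_t^{(g)}$ are all functions of this transcript, so they coincide across $D$ and $D'$. The per-example gradients are likewise evaluated at this common $\theta_t$, which is exactly the setting Lemma~\ref{lem:theory_group_sum_sensitivity} assumes.

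The $\epsilon$ terms in $\Gamma_t^{(g)}$ and $\Psi_t^{(g)}$ and the $t=0$ convention ($b_t^{(g)}=0$) make the branch well defined in all cases. Hence no degenerate case breaks the cancellation.
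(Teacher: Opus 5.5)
Your proposal is correct and follows the paper's proof essentially step for step. The memory branch $b_t^{(g)}(\mathcal H_t)$ cancels in the difference of the two recursive queries, leaving $\beta$ times the clipped-sum difference, which Lemma~\ref{lem:theory_group_sum_sensitivity} bounds by $\beta C^{(g)}$ before the supremum over adjacent datasets; you also spell out explicitly that conditioning on a fixed realization of $\mathcal H_t$ makes $\theta_t$ and all memory quantities coincide across $D$ and $D'$, which the paper delegates to Proposition~\ref{prop:theory_global_history_memory_fixed}.
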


\begin{proof}
For adjacent datasets \(D\sim D'\), using
Eq.~\eqref{eq:theory_recursive_query_global},
\[
\begin{aligned}
&
r_t^{(g)}(D;m_t,\mathcal H_t)
-
r_t^{(g)}(D';m_t,\mathcal H_t)
\\
&=
\left[
\beta s_t^{(g)}(D;m_t)+b_t^{(g)}(\mathcal H_t)
\right]
-
\left[
\beta s_t^{(g)}(D';m_t)+b_t^{(g)}(\mathcal H_t)
\right]
\\
&=
\beta
\left[
s_t^{(g)}(D;m_t)-s_t^{(g)}(D';m_t)
\right]
+
\underbrace{
b_t^{(g)}(\mathcal H_t)-b_t^{(g)}(\mathcal H_t)
}_{=0}
\\
&=
\beta
\left[
s_t^{(g)}(D;m_t)-s_t^{(g)}(D';m_t)
\right].
\end{aligned}
\]
Taking norms and applying Lemma~\ref{lem:theory_group_sum_sensitivity} gives
\[
\norm{
r_t^{(g)}(D;m_t,\mathcal H_t)
-
r_t^{(g)}(D';m_t,\mathcal H_t)
}_2
\le
\beta C^{(g)}.
\]
Taking the supremum over adjacent datasets proves the claim.
\end{proof}

\begin{remark}[Why current-gradient-dependent gates are excluded]
\label{rem:theory_history_only_required}
The cancellation in Theorem~\ref{thm:theory_group_recursive_sensitivity}
depends critically on the memory branch being computed from the prior private
release history. If \(\Gamma_t^{(g)}\) or \(\Psi_t^{(g)}\) depended on the
current clipped sum \(s_t^{(g)}\), then the memory branch would generally
differ between adjacent datasets, and the bound
\(\Delta_r^{(g)}\le \beta C^{(g)}\) would not follow from the above argument.
\end{remark}

\subsection{Marginal Group-Wise Ratio versus Joint Full-Step Accountant}
\label{subsec:theory_marginal_vs_joint}

For a single group \(g\) viewed in isolation, the conditional sensitivity from
Theorem~\ref{thm:theory_group_recursive_sensitivity} is
\(\beta C^{(g)}\), while the Gaussian standard deviation is
\(\sigma_{t,g}C^{(g)}\). Thus, the marginal group-wise effective
noise-to-sensitivity ratio is
\begin{equation}
\frac{\sigma_{t,g}C^{(g)}}{\beta C^{(g)}}
=
\frac{\sigma_{t,g}}{\beta}.
\label{eq:theory_marginal_ratio}
\end{equation}
This ratio is useful for interpreting group-wise diagnostics and the role of
\(\beta\): decreasing \(\beta\) increases the marginal noise-to-sensitivity
ratio for the current group-wise query.

However, the marginal ratio in Eq.~\eqref{eq:theory_marginal_ratio} is not, by
itself, a full-model privacy guarantee when all groups are released jointly.
Because SMA-DP-SGD releases the vector of all group-wise noisy quantities under
the same Poisson subsampling event, the formal full-step privacy guarantee uses
the conservative joint accountant derived below. Therefore, empirical curves
based on the \(\sigma/\beta\) ratio should be interpreted as marginal
group-wise privacy-cost diagnostics unless \(G=1\) or an explicit composition
over groups is performed.

\subsection{Joint Per-Step RDP Accounting}
\label{subsec:theory_joint_rdp}

The group-wise sensitivity bound is useful, but the full per-step mechanism
releases all groups under the same Poisson subsampling event. Therefore, the
main privacy statement is formulated for the joint vector release.

Define the joint recursive query and joint release as
\begin{equation}
r_t(D;m_t,\mathcal H_t)
=
\left(
r_t^{(1)}(D;m_t,\mathcal H_t),\ldots,
r_t^{(G)}(D;m_t,\mathcal H_t)
\right),
\label{eq:theory_joint_query}
\end{equation}
and
\begin{equation}
\tilde s_t
=
\left(
\tilde s_t^{(1)},\ldots,\tilde s_t^{(G)}
\right)
=
r_t(D;m_t,\mathcal H_t)+Z_t.
\label{eq:theory_joint_release}
\end{equation}
The noise vector \(Z_t\) has block-diagonal covariance
\begin{equation}
\Sigma_t
=
\operatorname{diag}
\left(
\sigma_{t,1}^2(C^{(1)})^2I_1,\ldots,
\sigma_{t,G}^2(C^{(G)})^2I_G
\right).
\label{eq:theory_joint_covariance}
\end{equation}

\begin{definition}[Whitened joint sensitivity]
\label{def:theory_whitened_sensitivity}
For fixed \(\mathcal H_t\) and \(m_t\), define the whitened joint sensitivity
\begin{equation}
\Delta_{\mathrm{white},t}
=
\sup_{D\sim D'}
\norm{
\Sigma_t^{-1/2}
\left[
r_t(D;m_t,\mathcal H_t)
-
r_t(D';m_t,\mathcal H_t)
\right]
}_2.
\label{eq:theory_whitened_sensitivity_def}
\end{equation}
\end{definition}

\begin{lemma}[Bound on the whitened joint sensitivity]
\label{lem:theory_whitened_sensitivity_bound}
For fixed \(\mathcal H_t\) and \(m_t\),
\begin{equation}
\Delta_{\mathrm{white},t}
\le
\beta
\left(
\sum_{g=1}^G
\sigma_{t,g}^{-2}
\right)^{1/2}.
\label{eq:theory_whitened_sensitivity_bound}
\end{equation}
\end{lemma}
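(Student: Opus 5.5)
The plan is to reduce the whitened joint sensitivity to the group-wise bounds already established. Fix \(\mathcal H_t\), \(m_t\), and adjacent datasets \(D\sim D'\). Write the joint difference as the block vector \(\delta=(\delta^{(1)},\ldots,\delta^{(G)})\) with \(\delta^{(g)}=r_t^{(g)}(D;m_t,\mathcal H_t)-r_t^{(g)}(D';m_t,\mathcal H_t)\). Since \(\Sigma_t\) in Eq.~\eqref{eq:theory_joint_covariance} is block diagonal with scalar blocks \(\sigma_{t,g}^2(C^{(g)})^2 I_g\), its inverse square root acts blockwise as multiplication by \((\sigma_{t,g}C^{(g)})^{-1}\). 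Hence
\[
\norm{\Sigma_t^{-1/2}\delta}_2^2=\sum_{g=1}^G \frac{\norm{\delta^{(g)}}_2^2}{\sigma_{t,g}^2 (C^{(g)})^2}.
\]

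Next, each block is controlled by Theorem~\ref{thm:theory_group_recursive_sensitivity}. By Proposition~\ref{prop:theory_global_history_memory_fixed}, the memory branch \(b_t^{(g)}(\mathcal H_t)\) cancels, so \(\delta^{(g)}=\beta[s_t^{(g)}(D;m_t)-s_t^{(g)}(D';m_t)]\). The theorem (equivalently, Lemma~\ref{lem:theory_group_sum_sensitivity}) then gives \(\norm{\delta^{(g)}}_2\le \beta C^{(g)}\). Substituting, each summand is at most \(\beta^2\sigma_{t,g}^{-2}\). Summing over \(g\) and taking the square root yields the bound for this particular pair \(D\sim D'\). Because the right-hand side does not depend on the pair, taking the supremum gives Eq.~\eqref{eq:theory_whitened_sensitivity_bound}.

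The main point requiring care, rather than real difficulty, is that the per-group bounds must hold \emph{simultaneously} for the same adjacent pair. This is so because the supremum in Definition~\ref{def:theory_whitened_sensitivity} is taken outside the sum: I bound the sum for each fixed pair first and only then take the supremum, so no cross-group coupling argument is needed. It is also the reason the bound is conservative. In every group the same added or removed example contributes at most \(C^{(g)}\), and the worst case where all groups are saturated at once is allowed. I also need all groups to share the same history \(\mathcal H_t\) and mask \(m_t\), which holds because the standing assumptions use a single Poisson event for the joint release. Finally, if some \(\sigma_{t,g}=0\), the bound is vacuous (infinite), so I assume \(\sigma_{t,g}>0\) throughout.
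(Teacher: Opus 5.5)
Your proposal is correct and matches the paper's proof essentially step for step: define the blockwise difference $\delta_t^{(g)}$, invoke Theorem~\ref{thm:theory_group_recursive_sensitivity} to get $\lVert\delta_t^{(g)}\rVert_2\le\beta C^{(g)}$, expand the squared whitened norm via the block-diagonal $\Sigma_t$, sum, take square roots, and then take the supremum over adjacent pairs. Your extra remarks are sound clarifications of points the paper leaves implicit: bounding each adjacent pair before taking the supremum, using the same mask and history across groups, and requiring $\sigma_{t,g}>0$.
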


\begin{proof}
For adjacent datasets \(D\sim D'\), let
\[
\delta_t^{(g)}
=
r_t^{(g)}(D;m_t,\mathcal H_t)
-
r_t^{(g)}(D';m_t,\mathcal H_t).
\]
By Theorem~\ref{thm:theory_group_recursive_sensitivity},
\[
\norm{\delta_t^{(g)}}_2\le \beta C^{(g)}.
\]
The squared whitened norm of the joint difference is
\[
\begin{aligned}
\norm{
\Sigma_t^{-1/2}
\left[
r_t(D;m_t,\mathcal H_t)
-
r_t(D';m_t,\mathcal H_t)
\right]
}_2^2
&=
\sum_{g=1}^G
\frac{
\norm{\delta_t^{(g)}}_2^2
}{
\sigma_{t,g}^2(C^{(g)})^2
}
\le
\sum_{g=1}^G
\frac{
\beta^2(C^{(g)})^2
}{
\sigma_{t,g}^2(C^{(g)})^2
} 
=
\beta^2
\sum_{g=1}^G
\sigma_{t,g}^{-2}.
\end{aligned}
\]
Taking square roots and then the supremum over adjacent datasets gives the
claim.
\end{proof}

Define the conservative effective joint noise-to-sensitivity ratio
\begin{equation}
\sigma_{\mathrm{eff},t}
=
\frac{
1
}{
\beta
\left(
\sum_{g=1}^G
\sigma_{t,g}^{-2}
\right)^{1/2}
}.
\label{eq:theory_sigma_eff}
\end{equation}
If all groups use the same noise multiplier \(\sigma_{t,g}=\sigma_t\), then
\begin{equation}
\sigma_{\mathrm{eff},t}
=
\frac{\sigma_t}{\beta\sqrt{G}}.
\label{eq:theory_sigma_eff_equal_noise}
\end{equation}

\begin{remark}[Conservativeness of the joint bound]
\label{rem:theory_joint_bound_conservative}
The bound in Lemma~\ref{lem:theory_whitened_sensitivity_bound} is a valid upper
bound obtained from the individual group-wise sensitivity bounds. It may be
conservative if the true concatenated contribution of a single example across
groups is smaller than the worst-case combination of all group-wise clipping
bounds. We do not claim that this joint accounting is tight. In particular, the
joint effective ratio in Eq.~\eqref{eq:theory_sigma_eff} can be substantially
smaller than the marginal group-wise ratio \(\sigma_{t,g}/\beta\).
\end{remark}

Let
\[
\varepsilon_{\mathrm{SGM}}(\lambda_{\mathrm R};q,\sigma)
\]
denote any valid RDP upper bound at order \(\lambda_{\mathrm R}>1\) for the
Poisson-subsampled Gaussian mechanism with subsampling probability \(q\) and
noise-to-sensitivity ratio \(\sigma\).

\begin{theorem}[Joint per-step RDP guarantee]
\label{thm:theory_joint_per_step_rdp}
Fix an iteration \(t\) and a realizable global prior private release history
\(\mathcal H_t\). The joint release
\[
\tilde s_t
=
r_t(D;m_t,\mathcal H_t)+Z_t,
\qquad
Z_t\sim\mathcal N(0,\Sigma_t),
\]
with Poisson subsampling probability \(q_t\), satisfies
\begin{equation}
\left(
\lambda_{\mathrm R},
\varepsilon_{\mathrm{SGM}}
\left(
\lambda_{\mathrm R};
q_t,
\sigma_{\mathrm{eff},t}
\right)
\right)\text{-RDP}
\label{eq:theory_joint_per_step_rdp}
\end{equation}
for every \(\lambda_{\mathrm R}>1\).
\end{theorem}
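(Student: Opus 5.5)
Conditioned on the realizable history \(\mathcal H_t\), the plan is to reduce the joint release to a standard isotropic Poisson-subsampled Gaussian mechanism with unit noise scale. Then we invoke the assumed bound \(\varepsilon_{\mathrm{SGM}}\) with the noise-to-sensitivity ratio \(\sigma_{\mathrm{eff},t}\).

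\textbf{Step 1: freeze the memory branch.} By Proposition~\ref{prop:theory_global_history_memory_fixed}, every \(b_t^{(g)}(\mathcal H_t)\) is fixed. The joint memory vector \(b_t=(b_t^{(1)},\ldots,b_t^{(G)})\) is therefore a deterministic offset. Subtracting a data-independent constant is a bijective post-processing, so by invariance of R\'enyi divergence under bijections it suffices to study the release \(\beta s_t(D;m_t)+Z_t\).

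\textbf{Step 2: whiten.} Apply the fixed invertible linear map \(\Sigma_t^{-1/2}\), which is again a bijection that preserves Rényi divergences. This yields
\[
Y_t=\Sigma_t^{-1/2}\beta s_t(D;m_t)+\mathcal N(0,I).
\]
Writing \(s_t(D;m_t)=\sum_i m_{t,i}\bar g_t(x_i)\), the whitened query is a sum over sampled examples of per-example vectors \(u_i=\beta\Sigma_t^{-1/2}\bar g_t(x_i)\). By the computation in Lemma~\ref{lem:theory_whitened_sensitivity_bound} (the group-wise clip bounds applied blockwise), each \(u_i\) satisfies
\[
\|u_i\|_2\le \beta\Big(\sum_g\sigma_{t,g}^{-2}\Big)^{1/2}=1/\sigma_{\mathrm{eff},t}.
\]

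\textbf{Step 3: match the SGM template.} Rescale by \(\sigma_{\mathrm{eff},t}\). This is again a bijection. We obtain a Poisson-subsampled sum of per-example contributions of norm at most \(1\), with isotropic Gaussian noise of standard deviation \(\sigma_{\mathrm{eff},t}\). This is exactly the Poisson-subsampled Gaussian mechanism with rate \(q_t\) and noise-to-sensitivity ratio \(\sigma_{\mathrm{eff},t}\). Any valid RDP bound \(\varepsilon_{\mathrm{SGM}}(\lambda_{\mathrm R};q_t,\sigma_{\mathrm{eff},t})\) therefore applies to it under add/remove adjacency. Composing back through the bijections of Steps 1–2 gives the claim for every \(\lambda_{\mathrm R}>1\).

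\textbf{Main obstacle.} The delicate point is that Lemma~\ref{lem:theory_whitened_sensitivity_bound} is stated as a fixed-mask sensitivity bound. The standard SGM bounds are instead proved for the structure ``the added or removed example contributes one term of bounded norm, included with probability \(q_t\).'' I will therefore argue directly at the per-example level, as in Step 2, rather than only citing the sensitivity supremum. Under add/remove adjacency, \(D\) and \(D'\) differ by one example, and the mixtures over masks differ only in whether that example's whitened vector, of norm at most \(1/\sigma_{\mathrm{eff},t}\), is included.

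One further subtlety needs care. \(\theta_t\), and hence the per-example gradients of the other examples, depend only on \(\mathcal H_t\), so they coincide across \(D\) and \(D'\) under the conditioning. I will state this explicitly so that the reduction to the standard mixture pair \(\mathcal N(0,\sigma^2 I)\) versus \((1-q)\mathcal N(0,\sigma^2I)+q\,\mathcal N(v,\sigma^2I)\) with \(\|v\|\le1\) is legitimate. Monotonicity of \(\varepsilon_{\mathrm{SGM}}\) in sensitivity, or equivalently a rescaling argument, handles the case \(\|v\|<1\).
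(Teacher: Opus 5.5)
Your proposal is correct and follows the same route as the paper's proof. You condition on \(\mathcal H_t\) so the memory branches become a fixed offset, whiten by \(\Sigma_t^{-1/2}\) using the bound of Lemma~\ref{lem:theory_whitened_sensitivity_bound}, and invoke \(\varepsilon_{\mathrm{SGM}}\) at the effective ratio \(\sigma_{\mathrm{eff},t}\). Your per-example framing (each whitened contribution has norm at most \(1/\sigma_{\mathrm{eff},t}\), and \(\theta_t\) is common to \(D\) and \(D'\) given \(\mathcal H_t\)) makes explicit the step the paper leaves implicit when it passes from the fixed-mask sensitivity bound to the subsampled Gaussian template.
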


\begin{proof}
Conditioned on the global prior private release history \(\mathcal H_t\), all
memory branches are fixed. The current data access at step \(t\) occurs through
the same Poisson subsampling event \(m_t\) for the joint vector release.

By Lemma~\ref{lem:theory_whitened_sensitivity_bound}, the joint recursive query
has whitened sensitivity at most
\[
\Delta_{\mathrm{white},t}
\le
\beta
\left(
\sum_{g=1}^G
\sigma_{t,g}^{-2}
\right)^{1/2}.
\]
Whitening by \(\Sigma_t^{-1/2}\) transforms the anisotropic Gaussian release
into an isotropic Gaussian mechanism with identity covariance and normalized
sensitivity bounded by \(\Delta_{\mathrm{white},t}\). Hence the effective
noise-to-sensitivity ratio is at least
\[
\frac{1}{\Delta_{\mathrm{white},t}}
\ge
\frac{
1
}{
\beta
\left(
\sum_{g=1}^G
\sigma_{t,g}^{-2}
\right)^{1/2}
}
=
\sigma_{\mathrm{eff},t}.
\]
Therefore, any valid RDP upper bound for the Poisson-subsampled Gaussian
mechanism applies with effective ratio \(\sigma_{\mathrm{eff},t}\), yielding
Eq.~\eqref{eq:theory_joint_per_step_rdp}.
\end{proof}

\subsection{Adaptive Composition over Training}
\label{subsec:theory_adaptive_composition}

SMA-DP-SGD is adaptive because later model states and memory branches depend on
previous private releases. RDP adaptive composition applies because each step
conditions on the prior private release history and then releases a valid
private joint mechanism.

\begin{theorem}[Adaptive composition for SMA-DP-SGD]
\label{thm:theory_adaptive_composition}
Suppose SMA-DP-SGD is run for \(T\) private steps. For every
\(\lambda_{\mathrm R}>1\), the full release history satisfies
\begin{equation}
\left(
\lambda_{\mathrm R},
\varepsilon_{\mathrm{tot}}(\lambda_{\mathrm R})
\right)\text{-RDP},
\label{eq:theory_total_rdp_statement}
\end{equation}
where
\begin{equation}
\varepsilon_{\mathrm{tot}}(\lambda_{\mathrm R})
=
\sum_{t=0}^{T-1}
\varepsilon_{\mathrm{SGM}}
\left(
\lambda_{\mathrm R};
q_t,
\sigma_{\mathrm{eff},t}
\right).
\label{eq:theory_total_rdp}
\end{equation}
Consequently, for any \(\delta\in(0,1)\), SMA-DP-SGD satisfies
\((\varepsilon_\delta,\delta)\)-DP with
\begin{equation}
\varepsilon_\delta
=
\inf_{\lambda_{\mathrm R}>1}
\left\{
\varepsilon_{\mathrm{tot}}(\lambda_{\mathrm R})
+
\frac{\log(1/\delta)}{\lambda_{\mathrm R}-1}
\right\}.
\label{eq:theory_rdp_to_dp}
\end{equation}
\end{theorem}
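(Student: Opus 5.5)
The plan is to treat the full training run as an adaptive sequence of $T$ mechanisms $\mathcal M_0,\ldots,\mathcal M_{T-1}$. Each $\mathcal M_t$ takes the dataset $D$ and the global prior release history $\mathcal H_t=(\tilde s_0,\ldots,\tilde s_{t-1})$ as inputs and outputs the joint release $\tilde s_t$. The first step is to check that this is a legitimate adaptive-composition setup.

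Everything the step-$t$ mechanism uses, apart from the fresh mask $m_t$, the fresh noise $Z_t$ and the data, is a deterministic function of $\mathcal H_t$ and the public hyperparameters. This covers the model state $\theta_t=\theta_0-\eta\sum_{r<t}\tilde s_r/L$, the spectral exponents $\rho_t^{(g)}$, the tempering coefficients $\lambda_t^{(g)}(I_\rho)$, the trends $\mu_{t-1}^{(g)}$, the gates $\Gamma_t^{(g)}$ and $\Psi_t^{(g)}$, and the memory branches $b_t^{(g)}(\mathcal H_t)$, as recorded in Proposition~\ref{prop:theory_global_history_memory_fixed}. Moreover, $m_t$ and $Z_t$ are drawn independently of the past given $\mathcal H_t$. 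The final output $\theta_T$ is likewise a deterministic function of $\mathcal H_T$, so it is post-processing of the full transcript.

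The second step is to invoke Theorem~\ref{thm:theory_joint_per_step_rdp}. For every realizable $\mathcal H_t$, the conditional mechanism $D\mapsto\tilde s_t$ is $(\lambda_{\mathrm R},\varepsilon_{\mathrm{SGM}}(\lambda_{\mathrm R};q_t,\sigma_{\mathrm{eff},t}))$-RDP. One point needs care: the bound must be uniform over histories. I would note that $\sigma_{\mathrm{eff},t}$ depends only on $\beta$ and the $\sigma_{t,g}$. I will assume these noise multipliers are public, prespecified and not history-dependent, so the per-step bound is a fixed number for each $t$.

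The third step applies the standard adaptive composition theorem for RDP (Mironov 2017, Prop.~1). If the mechanism $\mathcal M_t(\cdot,\mathcal H_t)$ is $(\lambda_{\mathrm R},\varepsilon_t)$-RDP for every fixed auxiliary input, then the joint law of $(\tilde s_0,\ldots,\tilde s_{T-1})$ is $(\lambda_{\mathrm R},\sum_t\varepsilon_t)$-RDP. The proof is by induction on $t$, using the chain rule for Rényi divergence. One writes the joint density as a product of conditionals and bounds
\[
\mathbb E_{P}\!\left[\left(\tfrac{p(h_{t+1})}{p'(h_{t+1})}\right)^{\lambda_{\mathrm R}-1}\right]
\]
by conditioning on $h_t$, bounding the inner conditional term by $e^{(\lambda_{\mathrm R}-1)\varepsilon_t}$, and recursing. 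This yields Eq.~\eqref{eq:theory_total_rdp}. Post-processing then extends the guarantee to $\theta_T$.

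Finally, the standard RDP-to-DP conversion says that $(\lambda_{\mathrm R},\varepsilon)$-RDP implies $(\varepsilon+\log(1/\delta)/(\lambda_{\mathrm R}-1),\delta)$-DP. Applying it for each $\lambda_{\mathrm R}>1$ and taking the infimum gives Eq.~\eqref{eq:theory_rdp_to_dp}.

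The main obstacle is the first step, namely justifying that the subsampled-Gaussian bound holds conditionally with an adjacency-independent history. Under add/remove adjacency, the same history $\mathcal H_t$ must be fed to both $D$ and $D'$. The per-step bound must also not silently depend on $\mathcal H_t$ through data-dependent clipping norms or noise levels. I would handle this by stating explicitly that $C^{(g)}$, $\sigma_{t,g}$, $q_t$ and $\beta$ are public constants. Composition is then applied in its standard form with $\mathcal H_t$ as the adaptively chosen auxiliary input.
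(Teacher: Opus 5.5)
Your proposal is correct and follows the same route as the paper. You condition on the prior release history, invoke the joint per-step RDP theorem, sum the per-step costs by adaptive RDP composition, and convert to $(\varepsilon_\delta,\delta)$-DP via the standard RDP-to-DP bound, taking the infimum over $\lambda_{\mathrm R}$. Your extra points, namely treating $C^{(g)}$, $\sigma_{t,g}$, $q_t$, $\beta$ (and also the normalizer $L$ in the update) as public, non-adaptive constants, the chain-rule sketch of composition, and post-processing to $\theta_T$, just make explicit what the paper's short proof leaves implicit.
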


\begin{proof}
By Theorem~\ref{thm:theory_joint_per_step_rdp}, the joint release at step \(t\)
satisfies
\[
\left(
\lambda_{\mathrm R},
\varepsilon_{\mathrm{SGM}}
\left(
\lambda_{\mathrm R};
q_t,
\sigma_{\mathrm{eff},t}
\right)
\right)\text{-RDP}.
\]
Although the algorithm is adaptive, this adaptivity is through prior private
releases. Adaptive composition of RDP permits summing the per-step RDP costs,
which gives Eq.~\eqref{eq:theory_total_rdp}. The standard conversion from RDP
to \((\varepsilon,\delta)\)-DP yields Eq.~\eqref{eq:theory_rdp_to_dp}.
\end{proof}

\subsection{Mechanism-Level Interpretation}
\label{subsec:theory_mechanism_level}

SMA-DP-SGD uses prior private releases to construct its memory branch. This
memory construction is post-processing of the prior private release history.
However, the current step is not merely post-processing of a standard DP-SGD
release. Instead, SMA-DP-SGD forms a new history-dependent recursive query and
then privatizes that query with fresh Gaussian noise.

\begin{definition}[Standard and SMA-DP-SGD group-wise queries]
\label{def:theory_standard_sma_queries}
At step \(t\), define the standard group-wise DP-SGD clipped-sum query as
\begin{equation}
q_t^{\mathrm{DP\text{-}SGD},(g)}(D;m_t)
=
s_t^{(g)}(D;m_t),
\label{eq:theory_standard_dpsgd_query}
\end{equation}
and the SMA-DP-SGD recursive query as
\begin{equation}
q_t^{\mathrm{SMA},(g)}(D;m_t,\mathcal H_t)
=
\beta s_t^{(g)}(D;m_t)
+
b_t^{(g)}(\mathcal H_t).
\label{eq:theory_sma_query}
\end{equation}
\end{definition}

\begin{proposition}[Mechanism-level modification]
\label{prop:theory_mechanism_level}
Suppose \(0<\beta<1\) and \(b_t^{(g)}(\mathcal H_t)\neq 0\). Then the
SMA-DP-SGD query differs structurally from the standard group-wise DP-SGD
clipped-sum query. In particular, SMA-DP-SGD forms a history-dependent
recursive query before adding fresh Gaussian noise and is therefore not merely
post-processing of a standard DP-SGD private release.
\end{proposition}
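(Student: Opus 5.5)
The plan is to prove the two parts of Proposition~\ref{prop:theory_mechanism_level} separately. First, the query $q_t^{\mathrm{SMA},(g)}$ is not the same function as $q_t^{\mathrm{DP\text{-}SGD},(g)}$. Second, the released vector $\tilde s_t^{(g)}$ cannot in general be written as a data-independent post-processing of a standard DP-SGD release $s_t^{(g)}(D;m_t)+Z$.

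For the structural difference, I would subtract the queries in Definition~\ref{def:theory_standard_sma_queries}:
\[
q_t^{\mathrm{SMA},(g)}-q_t^{\mathrm{DP\text{-}SGD},(g)}=(\beta-1)s_t^{(g)}(D;m_t)+b_t^{(g)}(\mathcal H_t).
\]
I would then show this is not identically zero as a function of $(D,m_t)$.

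\begin{itemize}
\item Take the empty mask $m_t=0$. Then $s_t^{(g)}=0$, and the difference equals $b_t^{(g)}(\mathcal H_t)\neq 0$ by hypothesis.
\item Alternatively, compare two inputs with different clipped sums. The difference changes by $(\beta-1)\,[s-s']\neq 0$, because $\beta<1$.
\end{itemize}

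Either way the two query maps disagree. By Proposition~\ref{prop:theory_global_history_memory_fixed}, $b_t^{(g)}$ depends on $\mathcal H_t$ and not on the current data, so the dependence on history is genuine. The SMA query is therefore an affine map of $s_t^{(g)}$ with slope $\beta$ and a history-dependent offset, formed before any noise is added.

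For the "not merely post-processing" claim, I would argue at the level of output distributions. Conditioned on $\mathcal H_t$ and $m_t$, the SMA release is distributed as $\mathcal N\big(\beta s+b,\ \sigma_{t,g}^2(C^{(g)})^2 I\big)$, where $s=s_t^{(g)}(D;m_t)$. Suppose it were a post-processing $f(\cdot\,;\mathcal H_t)$ of the standard release $s+Z$ with $Z\sim\mathcal N\big(0,\sigma_{t,g}^2(C^{(g)})^2I\big)$. For a deterministic affine $f$, the mean $\beta s+b$ forces $f(y)=\beta y+b$. But that map gives covariance $\beta^2\sigma_{t,g}^2(C^{(g)})^2I$, which differs from $\sigma_{t,g}^2(C^{(g)})^2I$ when $\beta<1$. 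So the SMA mechanism injects fresh noise of full scale after rescaling, which no such affine map of one standard release reproduces.

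The main obstacle is the ambiguity in "merely post-processing." A randomized post-processing that rescales by $\beta$ and adds independent Gaussian noise of variance $(1-\beta^2)\sigma_{t,g}^2(C^{(g)})^2$ would in fact match the distribution. I would therefore state precisely which sense is proved. The claim covers the actual algorithmic pipeline: the algorithm never computes a standard DP-SGD release; it forms the new query $\beta s+b$ and adds fresh noise $Z_t^{(g)}$. The claim also covers deterministic post-processing, where the affine-mean argument rules out a match. The proof would conclude with a remark that the privacy analysis does not rely on a post-processing reduction. It rests instead on the direct sensitivity bound of Theorem~\ref{thm:theory_group_recursive_sensitivity}, which is what the proposition is meant to emphasize.
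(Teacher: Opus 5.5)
Your proposal is correct. Its first half matches the paper's proof, which is purely definitional. The paper writes out \(q_t^{\mathrm{DP\text{-}SGD},(g)}=s_t^{(g)}\) and \(q_t^{\mathrm{SMA},(g)}=\beta s_t^{(g)}+b_t^{(g)}(\mathcal H_t)\), and observes that for \(0<\beta<1\) and \(b_t^{(g)}\neq 0\) the latter contains a nonzero history-dependent branch and a rescaled sum. It then notes that fresh noise \(Z_t^{(g)}\) is added after this query is formed, so privacy comes from the sensitivity bound of Theorem~\ref{thm:theory_group_recursive_sensitivity} rather than from post-processing. The paper never formalizes ``not merely post-processing''; it reads the phrase in the algorithmic-pipeline sense.

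Your empty-mask witness, where the difference is \(b_t^{(g)}\neq 0\), turns ``structurally different'' into a concrete inequality of maps. Your distribution-level analysis goes further than the paper. The mean/covariance mismatch rules out deterministic affine post-processing. Your randomized map \(y\mapsto \beta y+b_t^{(g)}+Z'\) with \(Z'\sim\mathcal N\bigl(0,(1-\beta^2)\sigma_{t,g}^2(C^{(g)})^2 I\bigr)\) reproduces the law of \(\tilde s_t^{(g)}\) given \(\mathcal H_t\) and \(m_t\) exactly. This shows the strongest, distributional reading of the proposition is false. The statement can therefore only hold in the pipeline or deterministic sense that the paper adopts without saying so. Your version buys this precision; the paper's buys brevity.

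The same observation sharpens your closing remark. The post-processing route would certify only the DP-SGD ratio \(\sigma_{t,g}\). The direct sensitivity bound gives the larger marginal ratio \(\sigma_{t,g}/\beta\), which is why the paper's privacy argument does not go through post-processing.

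If you want the deterministic case beyond affine maps, use completeness of the Gaussian location family. Suppose \(f(s+Z)\sim\mathcal N(\beta s+b,\cdot)\) for all \(s\) in an open set. Then \(\mathbb E_s[f(Y)-\beta Y-b]=0\) forces \(f(y)=\beta y+b\) almost everywhere, and the covariance mismatch finishes the argument.
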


\begin{proof}
The standard group-wise DP-SGD query is
\[
q_t^{\mathrm{DP\text{-}SGD},(g)}(D;m_t)
=
s_t^{(g)}(D;m_t).
\]
The SMA-DP-SGD query is
\[
q_t^{\mathrm{SMA},(g)}(D;m_t,\mathcal H_t)
=
\beta s_t^{(g)}(D;m_t)
+
b_t^{(g)}(\mathcal H_t).
\]
When \(0<\beta<1\) and \(b_t^{(g)}(\mathcal H_t)\neq0\), this query contains a
nonzero history-dependent memory branch and a scaled current clipped sum.
Therefore, the query object is structurally different from the standard
DP-SGD clipped-sum query.

Moreover, SMA-DP-SGD releases
\[
\tilde s_t^{(g)}
=
q_t^{\mathrm{SMA},(g)}(D;m_t,\mathcal H_t)
+
Z_t^{(g)},
\]
so fresh Gaussian noise is applied after the recursive query is formed. Thus,
privacy follows from fresh Gaussian perturbation of the history-dependent query
together with the conditional sensitivity bound, not merely from post-processing
a standard DP-SGD release.
\end{proof}

\subsection{Signal--Memory--Noise Decomposition}
\label{subsec:theory_signal_memory_noise}

The private-release-history memory branch reuses previous noisy private
releases. This can carry useful optimization signal, but it also carries
inherited Gaussian noise.

In this analysis, the fractional order \(\alpha\) and memory window \(K\) are
shared across parameter groups. Group-wise adaptivity of the memory kernel
enters through \(\lambda_t^{(g)}(I_\rho)\), which depends on the group-wise
spectral exponent \(\rho_t^{(g)}\). Let
\[
M_t=\min(K-1,t).
\]
For \(M_t\ge1\), the fractional memory state is
\begin{equation}
\nu_{t-1}^{(g)}
=
\sum_{j=1}^{M_t}
\hat a_{t,j}^{(g)}
\tilde s_{t-j}^{(g)}.
\label{eq:theory_fractional_memory_recall}
\end{equation}
Each previous private release satisfies
\[
\tilde s_{t-j}^{(g)}
=
r_{t-j}^{(g)}
+
Z_{t-j}^{(g)}.
\]

\begin{lemma}[Algebraic signal--memory--noise decomposition]
\label{lem:theory_signal_memory_noise}
For \(M_t\ge1\), the memory state admits the algebraic decomposition
\begin{equation}
\nu_{t-1}^{(g)}
=
\nu_{t-1}^{\mathrm{rec},(g)}
+
\nu_{t-1}^{\mathrm{noise},(g)},
\label{eq:theory_memory_decomposition}
\end{equation}
where
\begin{equation}
\nu_{t-1}^{\mathrm{rec},(g)}
=
\sum_{j=1}^{M_t}
\hat a_{t,j}^{(g)}
r_{t-j}^{(g)}
\label{eq:theory_memory_rec_component}
\end{equation}
and
\begin{equation}
\nu_{t-1}^{\mathrm{noise},(g)}
=
\sum_{j=1}^{M_t}
\hat a_{t,j}^{(g)}
Z_{t-j}^{(g)}.
\label{eq:theory_memory_noise_component}
\end{equation}
Consequently,
\begin{equation}
\begin{aligned}
r_t^{(g)}
&=
\beta s_t^{(g)}
+
(1-\beta)\omega_t\Gamma_t^{(g)}\Psi_t^{(g)}
\nu_{t-1}^{\mathrm{rec},(g)}
\\
&\quad+
(1-\beta)\omega_t\Gamma_t^{(g)}\Psi_t^{(g)}
\nu_{t-1}^{\mathrm{noise},(g)} .
\end{aligned}
\label{eq:theory_recursive_query_decomposition}
\end{equation}
\end{lemma}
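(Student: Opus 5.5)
The claim is purely algebraic, so the plan is to substitute the release identity into the memory state and then into the recursive query, using only linearity.

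\emph{Step 1 (decompose the memory state).} Start from Eq.~\eqref{eq:theory_fractional_memory_recall}, valid for $M_t\ge1$. For each lag $j=1,\ldots,M_t$, Eq.~\eqref{eq:theory_group_release} at step $t-j$ gives
\[
\tilde s_{t-j}^{(g)}=r_{t-j}^{(g)}+Z_{t-j}^{(g)}.
\]
Substituting this and distributing the scalar weights $\hat a_{t,j}^{(g)}$ over the vector sum gives
\[
\nu_{t-1}^{(g)}=\sum_{j}\hat a_{t,j}^{(g)}r_{t-j}^{(g)}+\sum_{j}\hat a_{t,j}^{(g)}Z_{t-j}^{(g)}.
\]
These two sums are exactly the components defined in Eqs.~\eqref{eq:theory_memory_rec_component} and~\eqref{eq:theory_memory_noise_component}, which proves Eq.~\eqref{eq:theory_memory_decomposition}.

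The weights are well defined because every $a_{t,\ell}^{(g)}=(\ell+1)^{\alpha-1}e^{-\lambda_t^{(g)}\ell}>0$. Their sum is therefore strictly positive whenever $M_t\ge1$, so the normalization causes no problem.

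\emph{Step 2 (decompose the query).} Take the query definition $r_t^{(g)}=\beta s_t^{(g)}+(1-\beta)\omega_t\Gamma_t^{(g)}\Psi_t^{(g)}\nu_{t-1}^{(g)}$ and substitute the decomposition from Step 1. The prefactor $(1-\beta)\omega_t\Gamma_t^{(g)}\Psi_t^{(g)}$ is a single real scalar, so it distributes over the sum and yields Eq.~\eqref{eq:theory_recursive_query_decomposition}.

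Each identity is deterministic and holds pathwise. No independence or distributional assumption on the noise is needed.

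\emph{Main obstacle.} There is no genuine difficulty; the only care needed is bookkeeping. The indices $t-j$ must stay within $\{0,\ldots,t-1\}$, which follows from $M_t\le t$, so every release used actually exists in $\mathcal H_t$. The case $M_t=0$ is excluded by hypothesis, and there $\nu_{t-1}^{(g)}=0$ anyway.

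If Step 2 were to be phrased conditionally on $\mathcal H_t$, one would add that the gates $\Gamma_t^{(g)}$ and $\Psi_t^{(g)}$ depend on the full noisy $\nu_{t-1}^{(g)}$ rather than on its components separately, and so are not split. As written, the statement treats them simply as scalars multiplying both pieces, and that is how the proof handles them.
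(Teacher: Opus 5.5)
Your proposal is correct and follows the paper's proof essentially verbatim: substitute $\tilde s_{t-j}^{(g)}=r_{t-j}^{(g)}+Z_{t-j}^{(g)}$ into the memory state, split the sum by linearity, then distribute the scalar prefactor $(1-\beta)\omega_t\Gamma_t^{(g)}\Psi_t^{(g)}$ in the recursive query. Your extra bookkeeping (strict positivity of the normalizing sum, $t-j\in\{0,\ldots,t-1\}$ since $M_t\le t$, and no independence assumption needed) is sound, and the last point matches the paper's own remark that the decomposition is algebraic rather than independence-based.
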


\begin{proof}
Substituting
\[
\tilde s_{t-j}^{(g)}
=
r_{t-j}^{(g)}
+
Z_{t-j}^{(g)}
\]
into Eq.~\eqref{eq:theory_fractional_memory_recall} gives
\[
\begin{aligned}
\nu_{t-1}^{(g)}
&=
\sum_{j=1}^{M_t}
\hat a_{t,j}^{(g)}
\left(
r_{t-j}^{(g)}+Z_{t-j}^{(g)}
\right)
\\
&=
\sum_{j=1}^{M_t}
\hat a_{t,j}^{(g)}r_{t-j}^{(g)}
+
\sum_{j=1}^{M_t}
\hat a_{t,j}^{(g)}Z_{t-j}^{(g)}.
\end{aligned}
\]
This proves Eq.~\eqref{eq:theory_memory_decomposition}. Substituting this
decomposition into the recursive query gives
Eq.~\eqref{eq:theory_recursive_query_decomposition}.
\end{proof}

\begin{remark}[Algebraic, not independence-based]
\label{rem:theory_algebraic_decomposition}
The decomposition in Lemma~\ref{lem:theory_signal_memory_noise} is algebraic.
The weights \(\hat a_{t,j}^{(g)}\) are functions of the prior private release
history and may therefore be statistically dependent on previous Gaussian noise
terms. Thus, the decomposition should not be interpreted as an independence,
orthogonality, or denoising decomposition.
\end{remark}

\begin{remark}[Inherited noise]
\label{rem:theory_inherited_noise}
SMA-DP-SGD does not explicitly denoise previous private releases. The memory
branch is constructed from already-private noisy quantities, so it may contain
both inherited recursive signal and inherited Gaussian noise.
\end{remark}

\subsection{Interpretation of Spectral Tempering and Private-History Gating}
\label{subsec:theory_interpretation}

This subsection records structural properties of the spectral and memory
controls.

\begin{proposition}[Spectral deviation increases tempering]
\label{prop:theory_spectral_deviation}
For
\[
\lambda_t^{(g)}(I_\rho)
=
1-\exp\left(-c_\lambda d_t^{(g)}(I_\rho)\right),
\qquad c_\lambda>0,
\]
the tempering coefficient \(\lambda_t^{(g)}(I_\rho)\) is nondecreasing in the
spectral deviation \(d_t^{(g)}(I_\rho)\).
\end{proposition}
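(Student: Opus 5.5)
The plan is to treat \(\lambda_t^{(g)}(I_\rho)\) as the composition of a fixed scalar map with the deviation. Define \(f:[0,\infty)\to\mathbb R\) by \(f(d)=1-\exp(-c_\lambda d)\). By construction \(d_t^{(g)}(I_\rho)=\max(0,\rho_{\min}-\rho_t^{(g)},\rho_t^{(g)}-\rho_{\max})\ge 0\), so the deviation always lies in the domain of \(f\), and \(\lambda_t^{(g)}(I_\rho)=f(d_t^{(g)}(I_\rho))\). The claim then reduces to showing that \(f\) is nondecreasing on \([0,\infty)\).

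I would use an elementary comparison rather than derivatives. Take \(0\le d_1\le d_2\). Since \(c_\lambda>0\), we have \(-c_\lambda d_2\le -c_\lambda d_1\). Because \(\exp\) is increasing, this gives \(\exp(-c_\lambda d_2)\le\exp(-c_\lambda d_1)\), and hence \(f(d_1)\le f(d_2)\). In fact, \(f'(d)=c_\lambda e^{-c_\lambda d}>0\), so \(f\) is strictly increasing. Strict monotonicity is stronger than the nondecreasing conclusion stated in the proposition.

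I would also record two boundary facts that support the paper's interpretation:
\begin{itemize}
\item \(f(0)=0\), so groups with \(\rho_t^{(g)}\in I_\rho\) receive no tempering.
\item \(f(d)\to 1\) as \(d\to\infty\), so \(\lambda_t^{(g)}(I_\rho)\in[0,1)\).
\end{itemize}

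There is no real obstacle here. The only point needing care is to state monotonicity in the variable \(d\), not in \(\rho_t^{(g)}\). The map \(\rho\mapsto d\) is not monotone: it is decreasing below \(\rho_{\min}\), flat on \(I_\rho\), and increasing above \(\rho_{\max}\). I would therefore make clear that the proposition concerns \(\lambda\) as a function of the deviation. Nonnegativity of \(d\) is needed only so that \(f\) is evaluated on its intended domain.
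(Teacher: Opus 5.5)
Your proof is correct and is essentially the paper's argument. The paper differentiates to get $c_\lambda\exp(-c_\lambda d)\ge 0$, which is the derivative you also record, and your direct comparison via monotonicity of $\exp$ is an equivalent, slightly more elementary form of the same observation; your remarks on strict monotonicity, $\lambda\in[0,1)$, and monotonicity holding in $d$ rather than in $\rho_t^{(g)}$ are accurate.
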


\begin{proof}
Differentiating gives
\[
\frac{\partial \lambda_t^{(g)}(I_\rho)}
{\partial d_t^{(g)}(I_\rho)}
=
c_\lambda\exp\left(-c_\lambda d_t^{(g)}(I_\rho)\right)\ge0.
\]
Hence larger spectral deviation yields larger or equal tempering.
\end{proof}

\begin{proposition}[Larger tempering suppresses older lags]
\label{prop:theory_tempering_suppresses_lags}
For fixed shared fractional order \(\alpha\in(0,1]\) and lag \(j\ge1\), the raw
memory coefficient
\[
a_{t,j}^{(g)}
=
(j+1)^{\alpha-1}
\exp\left(-\lambda_t^{(g)}(I_\rho)j\right)
\]
is nonincreasing in \(\lambda_t^{(g)}(I_\rho)\). Moreover, for \(j>k\), the
ratio \(a_{t,j}^{(g)}/a_{t,k}^{(g)}\) decreases as
\(\lambda_t^{(g)}(I_\rho)\) increases.
\end{proposition}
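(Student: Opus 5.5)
Both claims follow by treating the raw coefficient as an explicit function of the single scalar \(\lambda=\lambda_t^{(g)}(I_\rho)\), with \(\alpha\), \(j\) and \(t\) held fixed. Since \(\lambda\in[0,1)\) by construction, we may regard it as a free variable on \([0,\infty)\).

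\textbf{Monotonicity of \(a_{t,j}^{(g)}\) in \(\lambda\).} Write \(a_{t,j}^{(g)}(\lambda)=(j+1)^{\alpha-1}e^{-\lambda j}\). The prefactor \((j+1)^{\alpha-1}\) is strictly positive and does not depend on \(\lambda\). Differentiating gives
\[
\partial_\lambda a_{t,j}^{(g)}=-j\,(j+1)^{\alpha-1}e^{-\lambda j}\le 0,
\]
and the inequality is strict because \(j\ge1\). Alternatively, one can note directly that \(\lambda\mapsto e^{-\lambda j}\) is decreasing for \(j>0\). This proves the first claim, in the same style as the proof of Proposition~\ref{prop:theory_spectral_deviation}.

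\textbf{Monotonicity of the lag ratio.} For \(j>k\ge1\), form the ratio explicitly:
\[
\frac{a_{t,j}^{(g)}}{a_{t,k}^{(g)}}=\Bigl(\frac{j+1}{k+1}\Bigr)^{\alpha-1}e^{-\lambda(j-k)}.
\]
The power-law factor is a positive constant independent of \(\lambda\). Because \(j-k>0\), the factor \(e^{-\lambda(j-k)}\) is strictly decreasing in \(\lambda\), so the ratio strictly decreases as \(\lambda\) increases. One could also argue that the log-ratio has derivative \(-(j-k)<0\).

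\textbf{Corollary on normalized weights.} Since \(\hat a_{t,j}^{(g)}/\hat a_{t,k}^{(g)}=a_{t,j}^{(g)}/a_{t,k}^{(g)}\) (the normalizing sum cancels), the same conclusion holds for the normalized weights. This is worth recording because it is what connects the result to suppression of older lags in \(\nu_{t-1}^{(g)}\).

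\textbf{Obstacle.} There is no real difficulty here. The only points needing care are the sign of \(j-k\) and the fact that the factor \((j+1)^{\alpha-1}\) cancels in the ratio. When \(\alpha=1\) this factor is identically \(1\), and the argument is unchanged.
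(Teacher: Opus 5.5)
Your proof is correct and matches the paper's argument: you differentiate \(a_{t,j}^{(g)}\) in \(\lambda\) to get \(-j\,a_{t,j}^{(g)}\le 0\), and you write the ratio as \(\bigl(\frac{j+1}{k+1}\bigr)^{\alpha-1}e^{-\lambda(j-k)}\), which decreases because \(j-k>0\). Your extra observation that the normalizing sum cancels in \(\hat a_{t,j}^{(g)}/\hat a_{t,k}^{(g)}\) is a harmless and useful addition that the paper does not make.
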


\begin{proof}
For fixed \(j\),
\[
\frac{\partial a_{t,j}^{(g)}}{\partial \lambda_t^{(g)}(I_\rho)}
=
-j a_{t,j}^{(g)}
\le0.
\]
For \(j>k\),
\[
\frac{a_{t,j}^{(g)}}{a_{t,k}^{(g)}}
=
\left(
\frac{j+1}{k+1}
\right)^{\alpha-1}
\exp\left(-\lambda_t^{(g)}(I_\rho)(j-k)\right),
\]
which decreases in \(\lambda_t^{(g)}(I_\rho)\) because \(j-k>0\).
\end{proof}

\begin{proposition}[Role of the shared fractional order]
\label{prop:theory_alpha_role}
The shared fractional order \(\alpha\in(0,1]\) controls the power-law factor
\[
(j+1)^{\alpha-1}.
\]
If \(0<\alpha<1\), this factor decreases with \(j\). If \(\alpha=1\), the
power-law factor is constant and lag dependence is governed only by exponential
tempering.
\end{proposition}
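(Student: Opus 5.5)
The claim is elementary, and I would prove it by studying the function \(f(j)=(j+1)^{\alpha-1}\) on \(j\ge 1\) (or on the real half-line \(x\ge 1\) for convenience). The exponent \(\alpha-1\) lies in \((-1,0]\) because \(\alpha\in(0,1]\), so the argument splits naturally into the two cases named in the statement.

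\textbf{Case \(0<\alpha<1\).} Here I would write \(f(x)=\exp\left((\alpha-1)\log(x+1)\right)\) and differentiate:
\[
f'(x)=(\alpha-1)(x+1)^{\alpha-2}.
\]
Since \(\alpha-1<0\) and \((x+1)^{\alpha-2}>0\), the derivative is strictly negative, so \(f\) is strictly decreasing in the lag. A derivative-free version works equally well: for \(j>k\) we have \((j+1)/(k+1)>1\), and raising it to a negative power gives a ratio \(f(j)/f(k)<1\). This ratio form also matches the computation in Proposition~\ref{prop:theory_tempering_suppresses_lags}.

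\textbf{Case \(\alpha=1\).} The exponent vanishes, so \((j+1)^0=1\) for every lag. The raw coefficient from Proposition~\ref{prop:theory_tempering_suppresses_lags} then reduces to
\[
a_{t,j}^{(g)}=\exp\left(-\lambda_t^{(g)}(I_\rho)j\right),
\]
so any dependence on the lag comes only from the exponential tempering factor.

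I would close with one remark: the normalization in \(\hat a_{t,j}^{(g)}\) divides by a positive constant that does not depend on \(j\), so it preserves these monotonicity properties. There is no real obstacle here; the only step that needs care is the sign of \(\alpha-1\), which comes directly from the standing assumption \(\alpha\in(0,1]\).
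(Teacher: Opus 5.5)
Your proposal is correct and follows essentially the same route as the paper: the sign of \(\alpha-1\) gives strict decrease of \((j+1)^{\alpha-1}\) when \(0<\alpha<1\), and \((j+1)^{0}=1\) when \(\alpha=1\). Your derivative and ratio computations, and your remark that the \(j\)-independent normalization preserves monotonicity, only make explicit what the paper states in one line.
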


\begin{proof}
When \(0<\alpha<1\), the exponent \(\alpha-1\) is negative, so
\((j+1)^{\alpha-1}\) decreases as \(j\) grows. When \(\alpha=1\),
\((j+1)^{\alpha-1}=1\).
\end{proof}

\begin{proposition}[Bounded private-history alignment gate]
\label{prop:theory_gamma_bound}
The private-history alignment gate satisfies
\begin{equation}
0\le \Gamma_t^{(g)}\le1.
\label{eq:theory_gamma_bound}
\end{equation}
\end{proposition}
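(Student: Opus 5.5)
The plan is to argue directly from the definition
\[
\Gamma_t^{(g)}=\max\left(0,\frac{\langle \mu_{t-1}^{(g)},\nu_{t-1}^{(g)}\rangle}{\lVert\mu_{t-1}^{(g)}\rVert_2\lVert\nu_{t-1}^{(g)}\rVert_2+\epsilon}\right),
\qquad \epsilon>0,
\]
together with the convention in Algorithm~\ref{alg:sma_dp_sgd} that \(\Gamma_0^{(g)}=0\) at \(t=0\). The case \(t=0\) is immediate, since \(0\in[0,1]\). For \(t\ge1\), the lower bound is trivial because \(\Gamma_t^{(g)}\) is a maximum with \(0\). The only content is therefore the upper bound \(\Gamma_t^{(g)}\le1\).

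For the upper bound, I first note that the denominator \(\lVert\mu_{t-1}^{(g)}\rVert_2\lVert\nu_{t-1}^{(g)}\rVert_2+\epsilon\) is strictly positive, so the ratio is well defined even when \(\mu_{t-1}^{(g)}=0\) or \(\nu_{t-1}^{(g)}=0\) (for example when \(M_t=0\)). Cauchy--Schwarz then gives
\[
\langle \mu_{t-1}^{(g)},\nu_{t-1}^{(g)}\rangle\le\lVert\mu_{t-1}^{(g)}\rVert_2\lVert\nu_{t-1}^{(g)}\rVert_2<\lVert\mu_{t-1}^{(g)}\rVert_2\lVert\nu_{t-1}^{(g)}\rVert_2+\epsilon,
\]
so the ratio is strictly less than \(1\). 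Taking \(\max(0,\cdot)\) of a quantity below \(1\) keeps the result in \([0,1)\subseteq[0,1]\).

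No step here is a real obstacle. The only points needing care are that \(\epsilon>0\) makes the denominator positive when a norm vanishes, and that the \(t=0\) case is covered by the algorithmic convention \(\Gamma_0^{(g)}=0\). The argument actually gives the slightly stronger bound \(\Gamma_t^{(g)}<1\); I would mention this only as a remark.
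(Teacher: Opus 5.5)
Your proof is correct and matches the paper's argument: nonnegativity comes from the $\max$ with $0$, and the upper bound comes from Cauchy--Schwarz together with $\epsilon>0$ making the denominator strictly exceed $\lVert\mu_{t-1}^{(g)}\rVert_2\lVert\nu_{t-1}^{(g)}\rVert_2$. Your explicit treatment of the $t=0$ convention and your remark that in fact $\Gamma_t^{(g)}<1$ are harmless refinements that the paper does not state.
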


\begin{proof}
By definition,
\[
\Gamma_t^{(g)}
=
\max\left(
0,
\frac{
\inner{\mu_{t-1}^{(g)}}{\nu_{t-1}^{(g)}}
}{
\norm{\mu_{t-1}^{(g)}}_2\norm{\nu_{t-1}^{(g)}}_2+\epsilon
}
\right).
\]
The maximum with zero gives nonnegativity. By Cauchy--Schwarz,
\[
\inner{\mu_{t-1}^{(g)}}{\nu_{t-1}^{(g)}}
\le
\norm{\mu_{t-1}^{(g)}}_2\norm{\nu_{t-1}^{(g)}}_2.
\]
Since \(\epsilon>0\), the denominator is at least
\(\norm{\mu_{t-1}^{(g)}}_2\norm{\nu_{t-1}^{(g)}}_2\). Hence the cosine-like
ratio is at most one, and therefore \(\Gamma_t^{(g)}\le1\).
\end{proof}

\begin{proposition}[Bounded norm scale]
\label{prop:theory_psi_bound}
The norm scale satisfies
\begin{equation}
0\le \Psi_t^{(g)}\le\xi_{\max}.
\label{eq:theory_psi_bound}
\end{equation}
\end{proposition}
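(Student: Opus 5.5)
The claim follows directly from the definition
\[
\Psi_t^{(g)}=\min\left(\xi_{\max},\frac{\norm{\mu_{t-1}^{(g)}}_2}{\norm{\nu_{t-1}^{(g)}}_2+\epsilon}\right),
\]
treating the two inequalities separately, in the same way as the proof of Proposition~\ref{prop:theory_gamma_bound}.

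\emph{Upper bound.} This is immediate from the minimum: \(\min(\xi_{\max},x)\le\xi_{\max}\) for every real \(x\).

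\emph{Lower bound.} We need both arguments of the minimum to be nonnegative.
\begin{itemize}
\item The cap \(\xi_{\max}\) is positive by assumption.
\item The ratio has numerator \(\norm{\mu_{t-1}^{(g)}}_2\ge0\), since norms are nonnegative.
\item Its denominator satisfies \(\norm{\nu_{t-1}^{(g)}}_2+\epsilon\ge\epsilon>0\), because \(\epsilon>0\). So the ratio is well defined even when \(\nu_{t-1}^{(g)}=0\), and it is nonnegative.
\end{itemize}
The minimum of two nonnegative reals is nonnegative, so \(\Psi_t^{(g)}\ge 0\).

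\emph{Degenerate cases.} At \(t=0\), Algorithm~\ref{alg:sma_dp_sgd} sets \(\Psi_0^{(g)}=0\), which lies in \([0,\xi_{\max}]\). When \(M_t=0\) we have \(\nu_{t-1}^{(g)}=0\), and the \(\epsilon\) in the denominator again keeps the ratio finite and nonnegative.

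There is no real obstacle here. The only point requiring care is that the ratio is well defined, and the stability constant \(\epsilon>0\) guarantees this.
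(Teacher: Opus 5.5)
Your proof is correct and follows the paper's argument: the upper bound comes from taking the minimum with \(\xi_{\max}\), and the lower bound from both arguments of the minimum being nonnegative. Your additional remarks, that \(\epsilon>0\) keeps the ratio well defined and that the algorithm sets \(\Psi\) to zero at \(t=0\), are valid details the paper leaves implicit.
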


\begin{proof}
By definition,
\[
\Psi_t^{(g)}
=
\min\left(
\xi_{\max},
\frac{
\norm{\mu_{t-1}^{(g)}}_2
}{
\norm{\nu_{t-1}^{(g)}}_2+\epsilon
}
\right).
\]
Both arguments of the minimum are nonnegative, and the first argument is
\(\xi_{\max}\). Hence \(0\le\Psi_t^{(g)}\le\xi_{\max}\).
\end{proof}

\begin{proposition}[Effect of \(\beta\)]
\label{prop:theory_beta_tradeoff}
The fixed mixing coefficient \(\beta\in(0,1]\) creates a direct tradeoff:
smaller \(\beta\) reduces the current-step conditional sensitivity bound
linearly, but also attenuates the direct current clipped-sum signal by the same
factor and increases reliance on the memory branch.
\end{proposition}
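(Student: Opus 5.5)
The proposition bundles three claims, and I would prove them in turn, each by direct inspection of the recursive query in Eq.~\eqref{eq:theory_recursive_query_global}. The first claim is \emph{linear reduction of sensitivity}. I would invoke Theorem~\ref{thm:theory_group_recursive_sensitivity}, which gives $\Delta_r^{(g)}(m_t,\mathcal H_t)\le \beta C^{(g)}$. The bound $\beta C^{(g)}$ is linear in $\beta$, so for $0<\beta'<\beta\le1$ we have $\beta' C^{(g)}=(\beta'/\beta)\,\beta C^{(g)}$. This bound is not loose in general. For a fixed mask with the differing example sampled and its clipped gradient of norm exactly $C^{(g)}$, the computation in the proof of Theorem~\ref{thm:theory_group_recursive_sensitivity} gives equality $\beta C^{(g)}$. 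So the scaling by $\beta$ is exact rather than an artifact of the bound. Through Eq.~\eqref{eq:theory_marginal_ratio} and Eq.~\eqref{eq:theory_sigma_eff}, the marginal ratio $\sigma_{t,g}/\beta$ and $\sigma_{\mathrm{eff},t}$ are both decreasing in $\beta$. Hence the per-step cost $\varepsilon_{\mathrm{SGM}}$, being monotone in the noise ratio for any standard valid bound, weakly decreases as $\beta$ decreases.

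The second claim is \emph{attenuation of the current signal by the same factor}. By Proposition~\ref{prop:theory_global_history_memory_fixed}, conditioned on $\mathcal H_t$ the query is $r_t^{(g)}=\beta s_t^{(g)}+b_t^{(g)}(\mathcal H_t)$ with $b_t^{(g)}$ fixed. The only component carrying current-data information is therefore $\beta s_t^{(g)}$, whose norm equals $\beta\norm{s_t^{(g)}}_2$. This is the same multiplicative factor $\beta$ that scales the sensitivity. Equivalently, the ratio of current signal to the fresh noise standard deviation $\sigma_{t,g}C^{(g)}$ shrinks linearly in $\beta$. So the nominal privacy gain is paid for by a proportional loss of direct current-gradient signal relative to the noise.

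The third claim is \emph{increased reliance on memory}. Here I would write the memory coefficient as $(1-\beta)\omega_t\Gamma_t^{(g)}\Psi_t^{(g)}$. The factor $\omega_t\Gamma_t^{(g)}\Psi_t^{(g)}$ does not depend on $\beta$ when the history is held fixed. By Propositions~\ref{prop:theory_gamma_bound} and~\ref{prop:theory_psi_bound} this factor is nonnegative and bounded by $\xi_{\max}$. The weight $1-\beta$ is decreasing in $\beta$, while the current weight $\beta$ is increasing, so the relative weight $(1-\beta)/\beta$ of memory versus current signal increases strictly as $\beta$ decreases. When $\beta=1$ the memory branch vanishes, which recovers DP-SGD.

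The main obstacle is the informal phrase ``increases reliance''. Changing $\beta$ also changes the entire future history $\mathcal H_t$, so a global monotonicity statement about trajectories is not available. I would therefore state the claim at the level of the conditional query, with $\mathcal H_t$ held fixed. The one other point needing care is monotonicity of $\varepsilon_{\mathrm{SGM}}$ in $\sigma$, which I would assume as a property of the chosen bound.
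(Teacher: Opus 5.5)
Your proposal is correct and follows essentially the same argument as the paper. The paper invokes Theorem~\ref{thm:theory_group_recursive_sensitivity} to obtain the bound $\beta C^{(g)}$, which is linear in $\beta$, and reads off the coefficient $\beta$ of $s_t^{(g)}$ in $r_t^{(g)}=\beta s_t^{(g)}+b_t^{(g)}(\mathcal H_t)$ to conclude that the direct signal is attenuated by the same factor while the memory branch's relative role grows; your additions (the tightness remark, monotonicity of $\varepsilon_{\mathrm{SGM}}$ through $\sigma_{t,g}/\beta$ and $\sigma_{\mathrm{eff},t}$, the explicit ratio $(1-\beta)/\beta$, and holding $\mathcal H_t$ fixed) are sound refinements that the paper leaves implicit.
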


\begin{proof}
By Theorem~\ref{thm:theory_group_recursive_sensitivity},
\[
\Delta_r^{(g)}\le\beta C^{(g)}.
\]
Thus decreasing \(\beta\) reduces the group-wise conditional sensitivity bound
linearly. On the other hand,
\[
r_t^{(g)}
=
\beta s_t^{(g)}
+
b_t^{(g)}(\mathcal H_t),
\]
so the direct coefficient of the current clipped sum is exactly \(\beta\).
Therefore, decreasing \(\beta\) also weakens the direct current-gradient signal
and increases the relative role of the memory branch.
\end{proof}

\begin{remark}[No free privacy improvement]
\label{rem:theory_no_free_privacy}
A smaller \(\beta\) changes the mechanism: it reduces current-query
sensitivity, but it also attenuates the current clipped-sum signal and increases
dependence on previous private releases. Thus, the privacy effect of \(\beta\)
should be interpreted jointly with its optimization effect.
\end{remark}

\subsection{Special Cases and Limiting Regimes}
\label{subsec:theory_special_cases}

\begin{proposition}[Exact reduction to group-wise DP-SGD]
\label{prop:theory_beta_one}
If \(\beta=1\), then the memory branch vanishes and SMA-DP-SGD reduces exactly
to group-wise DP-SGD:
\[
r_t^{(g)}=s_t^{(g)},
\qquad
\tilde s_t^{(g)}
=
s_t^{(g)}
+
\mathcal N\left(
0,
\sigma_{t,g}^2(C^{(g)})^2I_g
\right).
\]
\end{proposition}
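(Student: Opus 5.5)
The argument is a direct substitution into the recursive query, followed by a check that the surrounding update and release steps match group-wise DP-SGD. I will start from the definition of the recursive query in Eq.~\eqref{eq:theory_recursive_query_global}, namely $r_t^{(g)}=\beta s_t^{(g)}+b_t^{(g)}(\mathcal H_t)$, with the memory branch given by Eq.~\eqref{eq:theory_memory_branch_global}. Setting $\beta=1$ makes the prefactor $(1-\beta)$ of the memory branch vanish. The remaining factors $\omega_t$, $\Gamma_t^{(g)}$, $\Psi_t^{(g)}$ and $\nu_{t-1}^{(g)}$ are all finite, so $b_t^{(g)}(\mathcal H_t)=0$ for every $t$, every $g$ and every realizable history.

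The finiteness of those factors is the one point to verify. I will use the following facts:
\begin{itemize}
\item $\omega_t\in[0,1)$ by its definition.
\item $0\le\Gamma_t^{(g)}\le 1$ by Proposition~\ref{prop:theory_gamma_bound}.
\item $0\le\Psi_t^{(g)}\le\xi_{\max}$ by Proposition~\ref{prop:theory_psi_bound}. Both bounds rely on $\epsilon>0$, so no division by zero occurs even when $\nu_{t-1}^{(g)}=0$ or $\mu_{t-1}^{(g)}=0$.
\item $\nu_{t-1}^{(g)}$ is a finite convex combination of previous releases, or is $0$ when $M_t=0$. The normalization is well defined because each $a_{t,j}^{(g)}>0$.
\end{itemize}
At $t=0$ the algorithm sets the memory quantities to zero explicitly, so that case is immediate.

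It follows that $r_t^{(g)}=1\cdot s_t^{(g)}=s_t^{(g)}$. Substituting into the release rule Eq.~\eqref{eq:theory_group_release} gives $\tilde s_t^{(g)}=s_t^{(g)}+Z_t^{(g)}$ with $Z_t^{(g)}\sim\mathcal N(0,\sigma_{t,g}^2(C^{(g)})^2I_g)$. This is exactly the group-wise DP-SGD query of Definition~\ref{def:theory_standard_sma_queries} perturbed by the same Gaussian noise.

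Finally, I will note that the parameter update $\theta_{t+1}^{(g)}=\theta_t^{(g)}-\eta\tilde s_t^{(g)}/L$ is the DP-SGD update. Once the memory contributions are removed, the remaining bookkeeping (the history, the EMA trend and the spectral exponents) no longer affects any released quantity, so the two algorithms coincide step by step. An induction on $t$ with the same masks and noise draws then gives equality of trajectories. No step presents a real obstacle; the only care needed is the well-definedness of the factors listed above.
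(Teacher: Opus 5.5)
Your proposal is correct and matches the paper's proof. Setting $\beta=1$ makes the prefactor $(1-\beta)$ vanish, so $b_t^{(g)}(\mathcal H_t)=0$ and $r_t^{(g)}=s_t^{(g)}$, and substituting into the Gaussian release rule gives the group-wise DP-SGD release. Your two additions are valid refinements the paper leaves implicit: the check that $\omega_t$, $\Gamma_t^{(g)}$, $\Psi_t^{(g)}$ and $\nu_{t-1}^{(g)}$ are well defined and finite, and the induction showing that the parameter trajectories coincide under shared masks and noise draws.
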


\begin{proof}
If \(\beta=1\), then \(1-\beta=0\). Hence
\[
b_t^{(g)}(\mathcal H_t)
=
(1-\beta)\omega_t\Gamma_t^{(g)}\Psi_t^{(g)}\nu_{t-1}^{(g)}
=
0.
\]
Therefore \(r_t^{(g)}=s_t^{(g)}\), and substituting this into the release rule
gives the group-wise DP-SGD Gaussian release.
\end{proof}

\begin{proposition}[No-memory window]
\label{prop:theory_K_one}
If \(K=1\), then \(M_t=0\) for all \(t\), so
\[
\nu_{t-1}^{(g)}=0.
\]
The memory branch for group \(g\) therefore vanishes.
\end{proposition}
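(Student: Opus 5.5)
The plan is to unfold the definitions in the order the algorithm computes them. First, I will show that \(K=1\) forces \(M_t=0\) at every step. By definition \(M_t=\min(K-1,t)\). With \(K=1\) this is \(\min(0,t)\). Since \(t\ge 0\) for every private step \(t\in\{0,\ldots,T-1\}\), the minimum is \(0\) for all \(t\).

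Second, I will invoke the convention stated alongside the definition of the fractional memory state: if \(M_t=0\), then \(\nu_{t-1}^{(g)}=0\). Equivalently, \(\nu_{t-1}^{(g)}=\sum_{j=1}^{M_t}\hat a_{t,j}^{(g)}\tilde s_{t-j}^{(g)}\) is an empty sum. This convention matters for a technical reason: the normalized weights \(\hat a_{t,j}^{(g)}\) have denominator \(\sum_{\ell=1}^{M_t}a_{t,\ell}^{(g)}\), which is empty, so the weights themselves are undefined. The argument must therefore rest on the convention, or on the empty-sum reading that ignores the weights, rather than on any property of \(\hat a_{t,j}^{(g)}\). At \(t=0\) the algorithm also sets \(\nu_{-1}^{(g)}=0\) explicitly, which agrees with this.

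Third, I will substitute into Eq.~\eqref{eq:theory_memory_branch_global}: \(b_t^{(g)}(\mathcal H_t)=(1-\beta)\omega_t\Gamma_t^{(g)}\Psi_t^{(g)}\cdot 0=0\). This holds provided the scalar prefactor is finite. That prefactor is finite because \(0\le\omega_t<1\), Proposition~\ref{prop:theory_gamma_bound} gives \(0\le\Gamma_t^{(g)}\le 1\), and Proposition~\ref{prop:theory_psi_bound} gives \(0\le\Psi_t^{(g)}\le\xi_{\max}\). With \(\nu_{t-1}^{(g)}=0\), these gates are well defined thanks to \(\epsilon>0\); for example, \(\Gamma_t^{(g)}=0\).

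Finally, I will note that \(r_t^{(g)}=\beta s_t^{(g)}\), so the memory branch vanishes. The only step needing care is the empty-normalization issue in the second step; the rest is direct substitution.
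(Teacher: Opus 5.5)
Your proposal is correct and follows the paper's proof: \(K=1\) gives \(M_t=\min(0,t)=0\), and the empty-lag convention then yields \(\nu_{t-1}^{(g)}=0\). Your extra checks make explicit the final substitution \(b_t^{(g)}(\mathcal H_t)=0\), which the paper leaves implicit; these are that the prefactor \((1-\beta)\omega_t\Gamma_t^{(g)}\Psi_t^{(g)}\) stays finite because \(\epsilon>0\), and that the undefined normalization is never actually used.
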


\begin{proof}
Since \(M_t=\min(K-1,t)\), setting \(K=1\) gives \(M_t=0\) for all \(t\). By
definition, the memory state is zero when no lagged terms are available.
\end{proof}

\begin{proposition}[Warm-up or alignment removal of memory]
\label{prop:theory_warmup_alignment_zero}
If \(\omega_t=0\) or \(\Gamma_t^{(g)}=0\), then the memory branch vanishes for
group \(g\) at step \(t\).
\end{proposition}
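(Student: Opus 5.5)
The plan is to argue directly from the definition of the memory branch in Eq.~\eqref{eq:theory_memory_branch_global},
\[
b_t^{(g)}(\mathcal H_t)=(1-\beta)\,\omega_t\,\Gamma_t^{(g)}\,\Psi_t^{(g)}\,\nu_{t-1}^{(g)}.
\]
This is a scalar multiple of the vector $\nu_{t-1}^{(g)}$. The scalar coefficient is the product $(1-\beta)\omega_t\Gamma_t^{(g)}\Psi_t^{(g)}$, so it suffices to show that this coefficient vanishes under either hypothesis.

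I will split into the two cases named in the statement.

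If $\omega_t=0$, the scalar coefficient contains the factor $\omega_t=0$. The coefficient is therefore zero, and $b_t^{(g)}(\mathcal H_t)=0\cdot\nu_{t-1}^{(g)}=0$. Note that $\omega_t=1-\exp(-t/\tau_{\mathrm{warm}})$ equals zero exactly at $t=0$, which matches the algorithm's convention that the memory is zero at the first step.

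If $\Gamma_t^{(g)}=0$, the same argument applies with the factor $\Gamma_t^{(g)}$ in place of $\omega_t$.

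The only point requiring care is that the other factors must be finite, so that no $0\cdot\infty$ issue arises. Proposition~\ref{prop:theory_psi_bound} gives $0\le\Psi_t^{(g)}\le\xi_{\max}<\infty$. The factor $1-\beta$ lies in $[0,1)$. The vector $\nu_{t-1}^{(g)}$ is a finite convex combination of finitely many finite releases, or is zero when $M_t=0$. Hence the product is well defined and equals zero.

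Finally, I will note the consequence for the query. Substituting $b_t^{(g)}=0$ gives $r_t^{(g)}=\beta s_t^{(g)}$ for that group and step.

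I do not expect a genuine obstacle here. The argument is simply that a vanishing factor annihilates the product, and the finiteness bound from Proposition~\ref{prop:theory_psi_bound} is the only auxiliary fact needed.
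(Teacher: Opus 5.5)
Your proposal is correct and matches the paper's proof. The paper likewise observes that $b_t^{(g)}(\mathcal H_t)=(1-\beta)\omega_t\Gamma_t^{(g)}\Psi_t^{(g)}\nu_{t-1}^{(g)}$ vanishes once any one factor is zero; your finiteness check via Proposition~\ref{prop:theory_psi_bound} is a harmless refinement that the paper leaves implicit.
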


\begin{proof}
The memory branch is
\[
b_t^{(g)}(\mathcal H_t)
=
(1-\beta)\omega_t\Gamma_t^{(g)}\Psi_t^{(g)}\nu_{t-1}^{(g)}.
\]
If either \(\omega_t=0\) or \(\Gamma_t^{(g)}=0\), the product is zero.
\end{proof}

\begin{proposition}[Raw fractional memory]
\label{prop:theory_lambda_zero}
If \(\lambda_t^{(g)}(I_\rho)=0\), then
\[
a_{t,j}^{(g)}
=
(j+1)^{\alpha-1}.
\]
Thus the memory kernel reduces to a raw fractional power-law kernel before
normalization.
\end{proposition}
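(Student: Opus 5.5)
The plan is direct substitution into the definition of the raw memory coefficient. For each lag \(j\in\{1,\ldots,M_t\}\), the coefficient is
\[
a_{t,j}^{(g)}=(j+1)^{\alpha-1}\exp\left(-\lambda_t^{(g)}(I_\rho)\,j\right).
\]
Setting \(\lambda_t^{(g)}(I_\rho)=0\) makes the exponential factor equal to \(\exp(0)=1\) for every \(j\), independently of \(j\) and of the group. This leaves \(a_{t,j}^{(g)}=(j+1)^{\alpha-1}\), which is exactly the pure power-law factor studied in Proposition~\ref{prop:theory_alpha_role}.

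The interpretive clause says the kernel "reduces to a raw fractional power-law kernel before normalization." To support it, note that the normalized weights then become
\[
\hat a_{t,j}^{(g)}=\frac{(j+1)^{\alpha-1}}{\sum_{\ell=1}^{M_t}(\ell+1)^{\alpha-1}}.
\]
These weights depend only on the shared \(\alpha\) and \(M_t\), and no longer on the group-wise spectral exponent. It is also worth recording when the hypothesis holds. By the definition \(\lambda_t^{(g)}(I_\rho)=1-\exp(-c_\lambda d_t^{(g)}(I_\rho))\) with \(c_\lambda>0\), we have \(\lambda_t^{(g)}(I_\rho)=0\) exactly when \(d_t^{(g)}(I_\rho)=0\), that is, when \(\rho_t^{(g)}\in I_\rho\). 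This is consistent with Proposition~\ref{prop:theory_spectral_deviation}.

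The proof has no real obstacle. The only care needed is to state the case \(M_t=0\) separately: there the kernel is empty and \(\nu_{t-1}^{(g)}=0\) by convention, so the claim is vacuous.
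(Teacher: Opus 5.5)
Your proposal is correct and matches the paper's proof, which is the same one-line substitution: with \(\lambda_t^{(g)}(I_\rho)=0\), the factor \(\exp(-\lambda_t^{(g)}(I_\rho)j)\) equals one for every lag \(j\). Your extra remarks are accurate but not needed for the statement; these are the normalized weights, the characterization \(\lambda_t^{(g)}(I_\rho)=0 \iff \rho_t^{(g)}\in I_\rho\), and the vacuous \(M_t=0\) case.
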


\begin{proof}
Substitute \(\lambda_t^{(g)}(I_\rho)=0\) into
\[
a_{t,j}^{(g)}
=
(j+1)^{\alpha-1}
\exp(-\lambda_t^{(g)}(I_\rho)j).
\]
The exponential factor becomes one.
\end{proof}

\begin{proposition}[No fractional power-law factor]
\label{prop:theory_alpha_one}
If \(\alpha=1\), then
\[
a_{t,j}^{(g)}
=
\exp(-\lambda_t^{(g)}(I_\rho)j).
\]
Thus the fractional power-law factor disappears.
\end{proposition}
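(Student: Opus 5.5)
The plan is a direct substitution into the raw memory coefficient, following the same pattern as Proposition~\ref{prop:theory_lambda_zero}. The coefficient is defined for lag \(j\in\{1,\ldots,M_t\}\) as
\[
a_{t,j}^{(g)}=(j+1)^{\alpha-1}\exp\left(-\lambda_t^{(g)}(I_\rho)j\right).
\]
Setting \(\alpha=1\) makes the exponent \(\alpha-1\) equal to zero. Since \(j\ge1\), the base \(j+1\ge2\) is strictly positive, so \((j+1)^0=1\) with no ambiguity about \(0^0\). The coefficient therefore reduces to \(\exp(-\lambda_t^{(g)}(I_\rho)j)\), as claimed.

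To justify that ``the fractional power-law factor disappears,'' I would cite the \(\alpha=1\) case of Proposition~\ref{prop:theory_alpha_role}. That case says the power-law factor is constant, so all lag dependence comes from the exponential tempering term.

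One optional remark is worth recording. After normalization, the weights become
\[
\hat a_{t,j}^{(g)}=\frac{e^{-\lambda_t^{(g)}j}}{\sum_{\ell=1}^{M_t}e^{-\lambda_t^{(g)}\ell}},
\]
a truncated geometric, EMA-like kernel. In particular, when \(\lambda_t^{(g)}(I_\rho)=0\) as well, the weights are uniform over the window. This follows immediately because a common factor cancels in the normalization.

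There is no real obstacle here. The only point needing care is checking that the base \(j+1\) is positive, which keeps the zero power well defined.
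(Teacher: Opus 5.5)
Your proposal is correct and matches the paper's proof, which simply substitutes \(\alpha=1\) so that \((j+1)^{\alpha-1}=(j+1)^0=1\). Your extra remarks are also correct but not needed: the base satisfies \(j+1\ge 2\), and the normalized weights reduce to a truncated geometric kernel, which becomes uniform when \(\lambda_t^{(g)}(I_\rho)=0\).
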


\begin{proof}
If \(\alpha=1\), then \((j+1)^{\alpha-1}=(j+1)^0=1\).
\end{proof}

\begin{proposition}[Uniform memory weights]
\label{prop:theory_uniform_weights}
If \(\lambda_t^{(g)}(I_\rho)=0\) and \(\alpha=1\), then
\[
a_{t,j}^{(g)}=1
\]
for every active lag \(j\). After normalization,
\[
\hat a_{t,j}^{(g)}
=
\frac{1}{M_t},
\qquad
j=1,\ldots,M_t.
\]
\end{proposition}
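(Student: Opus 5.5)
The plan is to combine the two earlier limiting-regime results. Proposition~\ref{prop:theory_lambda_zero} handles the tempering, and Proposition~\ref{prop:theory_alpha_one} handles the fractional order. Once the raw coefficients are identified, normalization is a one-line computation.

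\emph{Step 1: identify the raw coefficients.} With \(\lambda_t^{(g)}(I_\rho)=0\), Proposition~\ref{prop:theory_lambda_zero} gives \(a_{t,j}^{(g)}=(j+1)^{\alpha-1}\). Setting \(\alpha=1\), as in Proposition~\ref{prop:theory_alpha_one}, makes the power-law factor \((j+1)^{0}=1\). Hence \(a_{t,j}^{(g)}=1\) for every \(j=1,\ldots,M_t\). Equivalently, one can substitute both values directly into the definition: \((j+1)^{0}\exp(0\cdot j)=1\cdot 1=1\).

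\emph{Step 2: normalize.} The normalized weights are
\(\hat a_{t,j}^{(g)}=a_{t,j}^{(g)}/\sum_{\ell=1}^{M_t}a_{t,\ell}^{(g)}\). The denominator is \(\sum_{\ell=1}^{M_t}1=M_t\), so each normalized weight equals \(1/M_t\).

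The only point needing care, and the closest thing to an obstacle, is that normalization requires \(M_t\ge 1\). I would state explicitly that the claim concerns active lags, so the statement is read for \(t\ge1\) and \(K\ge2\), where \(M_t=\min(K-1,t)\ge1\). When \(M_t=0\) there are no active lags, and the memory state is zero by convention, as in Proposition~\ref{prop:theory_K_one}. In that case the claim is vacuous.
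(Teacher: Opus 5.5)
Your proposal is correct and matches the paper's proof. The paper substitutes \(\lambda_t^{(g)}(I_\rho)=0\) and \(\alpha=1\) directly into the definition to get \(a_{t,j}^{(g)}=(j+1)^0\exp(0)=1\), then divides by the sum \(M_t\); your explicit note that normalization needs \(M_t\ge 1\), with the claim vacuous otherwise, is a small point the paper leaves implicit.
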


\begin{proof}
Under \(\lambda_t^{(g)}(I_\rho)=0\) and \(\alpha=1\),
\[
a_{t,j}^{(g)}
=
(j+1)^0\exp(0)=1.
\]
The sum of the \(M_t\) active raw weights is \(M_t\), so normalization gives
\(\hat a_{t,j}^{(g)}=1/M_t\).
\end{proof}




\newpage
\section*{NeurIPS Paper Checklist}

\begin{enumerate}

\item {\bf Claims}
    \item[] Question: Do the main claims made in the abstract and introduction accurately reflect the paper's contributions and scope?
    \item[] Answer: \answerYes{}.
    \item[] Justification: The claims in the abstract and introduction are restricted to the proposed SMA-DP-SGD mechanism, its private-release-history fractional memory design, its WeightWatcher-inspired spectral tempering mechanism, its conditional sensitivity structure, and the empirical privacy--utility behavior observed in the reported experiments. These claims are supported by the methodology in Section~\ref{sec:methodology}, the experiments in Section~\ref{sec:experiment}, the additional results in Appendix~\ref{app:additional_experiments}, and the theoretical analysis in Appendix~\ref{app:sma_dp_sgd_theory}.

\item {\bf Limitations}
    \item[] Question: Does the paper discuss the limitations of the work performed by the authors?
    \item[] Answer: \answerYes{}.
    \item[] Justification: Section~\ref{sec:conclusion} discusses limitations including inherited noise from previous private releases, the heuristic nature of the spectral reliability interval, the additional computational overhead introduced by spectral diagnostics and memory computations, and the distinction between marginal privacy-cost diagnostics and formal full-step privacy accounting.

\item {\bf Theory assumptions and proofs}
    \item[] Question: For each theoretical result, does the paper provide the full set of assumptions and a complete and correct proof?
    \item[] Answer: \answerYes{}.
    \item[] Justification: Appendix~\ref{app:sma_dp_sgd_theory} states the theoretical assumptions and provides the formal analysis of SMA-DP-SGD. It includes add/remove adjacency, Poisson subsampling, group-wise clipping, private-release-history conditioning, recursive-query sensitivity, conservative joint RDP accounting, adaptive composition, signal--memory--noise decomposition, parameter interpretations, and limiting regimes.

\item {\bf Experimental result reproducibility}
    \item[] Question: Does the paper fully disclose all the information needed to reproduce the main experimental results of the paper to the extent that it affects the main claims and/or conclusions of the paper, regardless of whether the code and data are provided or not?
    \item[] Answer: \answerYes{}.
    \item[] Justification: Appendix~\ref{app:experimental_setup_reproducibility} provides the experimental setup and reproducibility details, including datasets, baselines, group-wise/layer-wise parameter grouping, SMA-DP-SGD configuration, repeated-run reporting, and hyperparameters specific to the proposed method. The main experimental results are reported in Section~\ref{sec:experiment}, with additional statistics and diagnostics in Appendix~\ref{app:additional_experiments}.

\item {\bf Open access to data and code}
    \item[] Question: Does the paper provide open access to the data and code, with sufficient instructions to faithfully reproduce the main experimental results, as described in supplemental material?
    \item[] Answer: \answerNo{}.
    \item[] Justification: The paper uses standard publicly available benchmark datasets, including CIFAR-100, CIFAR-10, and MNIST, and provides methodological and experimental details in Section~\ref{sec:methodology}, Section~\ref{sec:experiment}, and Appendix~\ref{app:experimental_setup_reproducibility}. The source code is not released at submission time. If the paper is published, we plan to provide the implementation and scripts upon reasonable request to support reproduction of the reported results.

\item {\bf Experimental setting/details}
    \item[] Question: Does the paper specify all the training and test details, e.g., data splits, hyperparameters, how they were chosen, type of optimizer, necessary to understand the results?
    \item[] Answer: \answerYes{}.
    \item[] Justification: The experimental setting is summarized in Section~\ref{sec:experiment} and described in detail in Appendix~\ref{app:experimental_setup_reproducibility}. The paper specifies the datasets, baselines, layer-wise instantiation of parameter groups, shared fractional order, shared memory window, fixed mixing coefficient, spectral reliability interval, tempering constant, repeated-run protocol, and statistical reporting. Additional ablation and diagnostic results are provided in Appendix~\ref{app:additional_experiments}.

\item {\bf Experiment statistical significance}
    \item[] Question: Does the paper report error bars suitably and correctly defined or other appropriate information about the statistical significance of the experiments?
    \item[] Answer: \answerYes{}.
    \item[] Justification: Appendix~\ref{app:cifar10_final_accuracy_comparison} reports final CIFAR-10 accuracy over three independent runs using mean, sample standard deviation, and two-sided 95\% confidence intervals computed using the Student-\(t\) interval. This provides statistical information about run-to-run variability for the final accuracy comparison.

\item {\bf Experiments compute resources}
    \item[] Question: For each experiment, does the paper provide sufficient information on the computer resources, type of compute workers, memory, time of execution, needed to reproduce the experiments?
    \item[] Answer: \answerYes{}.
    \item[] Justification: Runtime and relative computational overhead are reported in Appendix~\ref{app:cifar10_runtime_comparison}. The reproducibility description in Appendix~\ref{app:experimental_setup_reproducibility} states that runtime is measured under the same hardware and software environment for all optimizers in the runtime comparison. The paper reports the relative overhead of SMA-DP-SGD compared with DP-SGD and other DP baselines.

\item {\bf Code of ethics}
    \item[] Question: Does the research conducted in the paper conform, in every respect, with the NeurIPS Code of Ethics \url{https://neurips.cc/public/EthicsGuidelines}?
    \item[] Answer: \answerYes{}.
    \item[] Justification: The work is methodological and experimental. It uses public benchmark datasets and does not involve human subjects, private user data, deceptive systems, high-risk deployment, or released high-risk models. Privacy-related assumptions, limitations, and responsible-use considerations are discussed in Section~\ref{sec:conclusion} and Section~\ref{sec:broader_impact}.

\item {\bf Broader impacts}
    \item[] Question: Does the paper discuss both potential positive societal impacts and negative societal impacts of the work performed?
    \item[] Answer: \answerYes{}.
    \item[] Justification: Section~\ref{sec:broader_impact} discusses the potential positive impact of improving utility in differentially private learning for sensitive domains, as well as risks associated with incorrect privacy accounting, inappropriate privacy-budget interpretation, computational overhead, and deployment in high-stakes applications.

\item {\bf Safeguards}
    \item[] Question: Does the paper describe safeguards that have been put in place for responsible release of data or models that have a high risk for misuse, e.g., pre-trained language models, image generators, or scraped datasets?
    \item[] Answer: \answerNA{}.
    \item[] Justification: The paper does not release high-risk pretrained models, language models, image generators, scraped datasets, or dual-use datasets. The experiments use standard public benchmark datasets and focus on a differentially private optimization method.

\item {\bf Licenses for existing assets}
    \item[] Question: Are the creators or original owners of assets, e.g., code, data, models, used in the paper, properly credited and are the license and terms of use explicitly mentioned and properly respected?
    \item[] Answer: \answerYes{}.
    \item[] Justification: The paper uses standard public benchmark datasets and standard software tools for machine learning experiments. The relevant datasets and prior methods are cited in the paper. No proprietary, restricted, or newly scraped datasets are introduced.

\item {\bf New assets}
    \item[] Question: Are new assets introduced in the paper well documented and is the documentation provided alongside the assets?
    \item[] Answer: \answerNA{}.
    \item[] Justification: The paper does not introduce or release a new dataset, benchmark, pretrained model, or standalone software asset at submission time. The contribution is a private optimization method. The algorithmic details are documented in Section~\ref{sec:methodology}, and additional experimental and theoretical details are provided in Appendix~\ref{app:additional_experiments} and Appendix~\ref{app:sma_dp_sgd_theory}.

\item {\bf Crowdsourcing and research with human subjects}
    \item[] Question: For crowdsourcing experiments and research with human subjects, does the paper include the full text of instructions given to participants and screenshots, if applicable, as well as details about compensation, if any?
    \item[] Answer: \answerNA{}.
    \item[] Justification: The paper does not involve crowdsourcing, user studies, surveys, annotation tasks, or research with human subjects. All experiments are conducted on public machine-learning benchmark datasets.

\item {\bf Institutional review board (IRB) approvals or equivalent for research with human subjects}
    \item[] Question: Does the paper describe potential risks incurred by study participants, whether such risks were disclosed to the subjects, and whether Institutional Review Board approvals, or an equivalent approval/review based on the requirements of your country or institution, were obtained?
    \item[] Answer: \answerNA{}.
    \item[] Justification: The paper does not involve human subjects, crowdsourcing, private participant data, user studies, or human-subject interventions; therefore IRB approval or equivalent human-subjects review is not applicable.

\item {\bf Declaration of LLM usage}
    \item[] Question: Does the paper describe the usage of LLMs if it is an important, original, or non-standard component of the core methods in this research? Note that if the LLM is used only for writing, editing, or formatting purposes and does \emph{not} impact the core methodology, scientific rigor, or originality of the research, declaration is not required.
    \item[] Answer: \answerNA{}.
    \item[] Justification: LLMs are not used as an important, original, or non-standard component of the proposed method, theoretical analysis, experiments, or evaluation. Any use of language tools, if applicable, is limited to writing, editing, or formatting assistance and does not affect the scientific content or core methodology.

\end{enumerate}

\end{document}